\documentclass[10pt,journal]{IEEEtran}
\usepackage{amsmath,amsfonts,amsthm,amsbsy,amssymb}
\usepackage{algorithm}
\usepackage{algpseudocode}
\usepackage{array}
\usepackage[caption=false,font=normalsize,labelfont=sf,textfont=sf]{subfig}
\usepackage{textcomp}
\usepackage{stfloats}
\usepackage{url}
\usepackage{verbatim}
\usepackage{graphicx}
\usepackage{acronym}
\usepackage{cite}
\usepackage{bm}
\usepackage{xcolor}
\usepackage{booktabs}
\usepackage{tabularx}
\usepackage{hyperref}

\acrodef{EVD}[EVD]{eigenvalue decomposition}
\acrodef{MAP}[MAP]{maximum a posteriori}
\acrodef{POP}[POP]{polynomial optimization problem}
\acrodef{QCQP}[QCQP]{quadratically constrained quadratic program}
\acrodefplural{QCQP}[QCQPs]{quadratically constrained quadratic programs}
\acrodef{SDP}[SDP]{semidefinite program}
\acrodefplural{SDP}[SDPs]{semidefinite programs}
\acrodef{SOS}[SOS]{sums-of-squares}
\acrodef{SVD}[SVD]{singular value decomposition}
\acrodef{TLS}[TLS]{truncated least-squares}
\acrodef{LICQ}[LICQ]{linear independence constraint qualification}
\acrodef{PSD}[PSD]{positive semidefinite}
\acrodef{PGO}[PGO]{pose graph optimization}
\acrodef{RS}[RS]{rotation synchronization}
\acrodef{SLAM}[SLAM]{simultaneous localization and mapping}
\acrodef{KKT}[KKT]{Karush--Kuhn--Tucker}
\acrodef{RANSAC}[RANSAC]{Random Sample Consensus}
\acrodef{PMC}[PMC]{Parallel Maximum Clique}
\acrodef{CPCert}[CP-Cert]{Central-Path Certifier}
\acrodef{PCG}[PCG]{preconditioned conjugate gradient}
\acrodef{GTSAM}[GTSAM]{Georgia Tech Smoothing and Mapping}

\newtheorem{theorem}{Theorem}
\newtheorem{lemma}[theorem]{Lemma}

\newtheorem{proposition}[theorem]{Proposition}

\newtheorem{assumption}{Assumption}
\newtheorem{remark}{Remark}

\newcommand{\inner}[2]{\langle#1,#2\rangle}
\newcommand{\Sym}{\mathbb{S}}
\newcommand{\PSD}{\mathbb{S}_+}
\newcommand{\PD}{\mathbb{S}_{++}}
\DeclareMathOperator{\traceop}{tr}
\newcommand{\trace}[1]{\traceop\left(#1\right)}

\DeclareMathOperator{\rankop}{rank}
\newcommand{\rank}[1]{\rankop(#1)}
\DeclareMathOperator{\support}{supp}
\newcommand{\supp}[1]{\support(#1)}
\DeclareMathOperator{\SOop}{SO}
\newcommand{\SO}[1]{\SOop(#1)}
\DeclareMathOperator{\SEop}{SE}
\newcommand{\SE}[1]{\SEop(#1)}
\newcommand{\vect}[1]{\mbox{vec}\left(#1\right)}
\newcommand{\subscr}[1]{\mbox{\scriptsize #1} }
\newcommand{\card}[1]{\vert#1\vert }
\DeclareMathOperator{\diagop}{diag}
\newcommand{\diag}[1]{\diagop\left(#1\right)}
\DeclareMathOperator{\Exp}{Exp}
\DeclareMathOperator{\st}{s.t.}

\def\candx{\hat{\bm{x}}}
\def\candX{\hat{\bm{X}}}

\def\optValSDP{\rho_{\subscr{SDP}}}
\def\optValQCQP{\rho_{\subscr{QCQP}}}
\def\optValAC{\rho_{\subscr{AC}}}
\def\objMSRC{\rho_{\mbox{\scriptsize MSRC}}}

\def\clique{\mathcal{C}}

\newcommand{\LinCons}[1]{\mathcal{A}(#1)}
\newcommand{\LinConsAdj}[1]{\mathcal{A}^*(#1)}
\def\Schur{\bm{D}}
\def\Precond{\tilde{\bm{D}}} 
\def\VecCons{\bar{\bm{B}}}

\def\Affinity{\bm{M}}

\def\SrcSet{\mathcal{P}}
\def\TrgSet{\mathcal{Q}}
\def\MatchSet{\mathcal{M}}

\def\ie{\emph{i.e.}}
\def\eg{\emph{e.g.}}

\newif\ifanonymous
\anonymousfalse

\begin{document}
\title{Following a Unique Path: A Fast Certifier Applied to Outlier-Robust Pose Registration}

\ifanonymous
\author{Author(s)' names removed for anonymization.}
\else
\author{Connor Holmes~\IEEEmembership{Student Member, IEEE} \and Abhishek Goudar~\IEEEmembership{Member, IEEE},\\ Timothy D. Barfoot~\IEEEmembership{Fellow, IEEE}
\thanks{This work was supported in part by the Natural Sciences and Engineering Research Council of Canada (NSERC).}
\thanks{CH, AG, and TDB are with the University of Toronto Robotics Institute, University of Toronto, Toronto, Ontario, Canada. Corresponding author: {\tt\footnotesize connor.holmes@mail.utoronto.ca}}
}
\fi



\markboth{Draft, August~2026}%
{How to Use the IEEEtran \LaTeX \ Templates}

\maketitle

\begin{abstract}
Certifiable methods have arisen as a means to guarantee global optimality of solutions to non-convex problems using convex \ac{SDP} relaxations.
The most performant of these methods use a local solver to obtain the candidate solution, and then certify its optimality using efficient linear algebra techniques.
However, for many problems of interest in robotics, this local-solve-then-certify approach is impeded by a form of degeneracy in the relaxation, leaving a costly optimization of the relaxation as the only recourse.
In this paper, we introduce our \ac{CPCert}, a certifiable method explicitly tailored to certify candidate optima to problems that exhibit this form of degeneracy. Using a candidate as a starting point, our approach seeks a nearby region of the feasible space -- known as the \emph{central path} -- where a valid certificate can be readily obtained. The approach is kept efficient by exploiting indirect linear algebra techniques, problem sparsity, and parallelism.
We apply \ac{CPCert} to both matrix-weighted pose registration and pointcloud data association, whose novel \ac{SDP} relaxation is of independent interest. On simulated examples, we explore the properties of this novel relaxation and show that \ac{CPCert} is fast and scalable, achieving runtimes that are up to three orders of magnitude faster than state-of-the-art direct solvers. Finally, we combine these contributions into a certifiable, outlier-robust pose-estimation pipeline, which we apply to real-world data.
\end{abstract}

\begin{IEEEkeywords}
Certifiable Methods, Data Association, Pose Registration
\end{IEEEkeywords}

\section{Introduction}\label{sec:intro}

\IEEEPARstart{T}{hough} non-convex optimization is a pervasive element of all areas of modern autonomy, it introduces a dangerous vulnerability into the robotic software stack. The necessity of low latency has driven the community to reliance on \emph{local} optimization techniques, which, although performant, can converge to spurious local optima. These local optima are rarely benign and can lead to catastrophic failure if left unchecked. Traditionally, this issue has been mitigated by establishing heuristic initializations that encourage convergence to the true, global optimum.

In recent years, a body of work has arisen to address this local-optimum vulnerability more directly; \emph{certifiable methods} use convex relaxations to endow a given optimization solution with a certificate of global optimality.
These methods have now been applied to a broad range of problems throughout robotics and computer vision, including pose graph optimization~\cite{rosenSESyncCertifiablyCorrect2019a,tianAsynchronousParallelDistributed2020,carloneLagrangianDuality3D2015a,brialesCartanSyncFastGlobal2017a}, rotation averaging~\cite{dellaertShonanRotationAveraging2020,erikssonRotationAveragingStrong2018}, essential matrix estimation~\cite{garcia-salgueroFastRobustCertifiable2021,zhaoEfficientSolutionNonMinimal2020}, sensor and robot calibration~\cite{wiseCertifiablyCorrectAlgorithm2026,songCertifiableAlignmentGNSS2026}, pointcloud regression and localization~\cite{yangTEASERFastCertifiable2021a,korotkineGloballyOptimalDataAssociationFree2025b}, trajectory optimization~\cite{kangGlobalContactRichPlanning2025a}, model predictive control~\cite{mahajanTinySDPRealTime}, and many more~\cite{yangCertifiablyOptimalOutlierRobust2023a,zhaoAdvancesGlobalSolvers2026}.

At present, one of the most commonly used relaxations in the community is known as \emph{Shor's relaxation}, which relaxes \acp{QCQP} to convex \acp{SDP}. The popularity of this relaxation is, in part, due to its versatility; any problem with polynomial cost and constraints can be represented as a \ac{QCQP}.

\begin{figure}[t]
    \centering
    \includegraphics[width=\columnwidth]{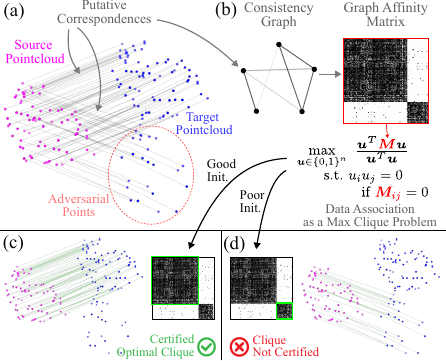}
    \caption{Choosing optimal correspondences can be posed as a certifiable maximum clique optimization problem using consistency graphs. (a) shows putative correspondences (gray) between a source (magenta) and target (blue) pointcloud for an adversarial case. In particular, the target pointcloud contains two copies of the source pointcloud under different transformations, with equal noise but unequal proportions of the total points (150 and 50, respectively). In (b), the correspondences are converted to a consistency graph, whose affinity matrix is used to set up data association as a weighted maximum clique optimization. A local solver for this problem (\eg, CLIPPER) can converge to a local or global optimum, depending on how it is initialized. In (c), we show the clique with correct correspondences, which is certified as globally optimal by our \ac{CPCert} method. In (d), we intentionally initialize the local solver poorly to show that it can converge to a local minimum clique, which is not certified by \ac{CPCert}.}
    \label{fig:adv-example}
\end{figure}

When applying certifiable methods to a problem, a core challenge involves finding a convex relaxation that is \emph{tight} (sometimes called \emph{cost tight}~\cite{dumbgenGloballyOptimalState2024}), meaning that it has the same optimal value as its non-convex counterpart. This paper is particularly concerned with tight \ac{SDP} relaxations whose solutions additionally have a maximum rank of one, which we refer to as \emph{rank-tight} relaxations~\cite{dumbgenGloballyOptimalState2024}.

Recent works have established means of identifying so-called `redundant' constraints that tighten the relaxation~\cite{dumbgenGloballyOptimalState2024} or a hierarchy of relaxations that converge to a rank-tight relaxation~\cite{lasserreMomentSOSHierarchyApplications2024}. These approaches have been deployed to good effect on an expanding set of robotics problems including outlier-robust~\cite{yangCertifiablyOptimalOutlierRobust2023a}, matrix-weighted~\cite{holmesSemidefiniteRelaxationsMatrixWeighted2024a}, and continuous-time~\cite{barfootCertifiablyOptimalRotation2024a} state estimation. Unfortunately, these approaches typically increase the number of constraints and/or dimension of the \ac{SDP} relaxation, which exacerbates already-existing scalability issues.

Performant certifiable methods typically apply fast, local optimization to obtain a candidate primal solution, then certify optimality by finding a uniquely defined set of \emph{dual certificate variables} via efficient linear algebra techniques~\cite{rosenSESyncCertifiablyCorrect2019a,garcia-salgueroFastRobustCertifiable2021}.\footnote{For formal definitions of primal and dual variables, see~\cite{boydConvexOptimization2004a,nocedalNumericalOptimization2006b}.} However, by their very nature, \ac{SDP} relaxations -- especially those that have been tightened using redundant constraints -- often exhibit a form of \emph{degeneracy}, which renders these dual certificate variables non-unique. Currently, the only recourse in these degenerate cases is to solve the \ac{SDP} directly. Due to their robustness, \emph{interior-point solvers} have been established as the \emph{de facto} solvers in this case, but are often too slow to be applied in robotics applications that involve more than a few hundred variables~\cite{yangCertifiablyOptimalOutlierRobust2023a}.

Consequently, though prior methods have established ways to generate tight relaxations for a broad class of robotics problems, the degeneracy of these relaxations has led to an intractability gap in robotics settings~\cite{dumbgenGloballyOptimalState2024,yangCertifiablyOptimalOutlierRobust2023a}.
In this paper, we seek to fill this gap by adopting the local-solve-then-certify approach; we introduce a fast and scalable method for certifying global optimality of candidate solutions to non-convex programs with rank-tight \ac{SDP} relaxations, even when these relaxations are degenerate.
Our core idea is to search for a region of the \ac{SDP} feasible space -- known as the \emph{central path} -- where the dual variables are uniquely defined.
Crucially, we leverage our knowledge of the candidate solution at every step of the algorithm to maximize efficiency and scalability.
Moreover, we adopt an indirect, preconditioned conjugate-gradient approach that avoids repeated construction, manipulation, and factorization of the large matrices, which bottleneck the existing interior-point solvers mentioned above~\cite{tohSolvingLargeScale2004}.
To the best of our knowledge, \ac{CPCert} is the first certifier able to exploit a candidate solution to certify large, degenerate relaxations in near real time without re-solving the \ac{SDP} from scratch.

More specifically, our contributions in this work are as follows:
\begin{itemize}
    \item We introduce and provide open-source code for \ac{CPCert}, a fast method for certifying global optimality of candidate solutions to \acp{QCQP} with tight \ac{SDP} relaxations. We describe the key mechanism that enables the efficiency of our certifier; we apply a scalable conjugate-gradient approach with a preconditioner based on the candidate solution.
    \item We introduce to the robotics community the first tractable means of certifying the problem of data association using a novel \ac{SDP} relaxation that is based on well-known relaxations in the optimization literature. Though this problem is degenerate, we show that it is empirically tight on robotics problems, making it amenable to certification with \ac{CPCert}.
    \item We show that \ac{CPCert} remains practical for substantially larger relaxations than state-of-the-art interior-point solvers, running faster across all problems that we test. In particular, we apply \ac{CPCert} to a broad range of simulated data-association problems as well as matrix-weighted pose registration~\cite{holmesSemidefiniteRelaxationsMatrixWeighted2024a}. Both of these problems have redundant constraints and are degenerate.
    \item We validate these contributions by combining them into a certifiable and robust pose-estimation pipeline for stereo-camera-based localization, and demonstrate it on real data.
\end{itemize}

The most closely related work to~\ac{CPCert} is Loraine~\cite{habibiLoraineInteriorpointSolver2024}, which applies a similar \ac{PCG} solver and served as one of the inspirations for this work. However, Loraine is a general-purpose interior-point solver for low-rank \acp{SDP} and does not leverage prior knowledge of candidate solutions when initializing or building its preconditioner.

The remainder of this paper is organized as follows. Section~\ref{sec:background} reviews the necessary background on \ac{SDP} relaxations, optimality certification, and interior-point methods. Section~\ref{sec:cp-cert} introduces \ac{CPCert}, our central-path certifier, including its main algorithm, stopping conditions, and an efficient preconditioned linear solver. Section~\ref{sec:applications} applies \ac{CPCert} to certifying pointcloud data association and to certifiable stereo-pointcloud registration. This section also introduces a novel relaxation that can be used to certify data-association problems. Section~\ref{sec:experiments} presents experimental results on simulated pointcloud data and on a stereo pipeline applied to real-world data. Section~\ref{sec:conclusion} provides high-level conclusions of the paper as well as future directions of research.

\section{Background}\label{sec:background}

\subsection{Notation}\label{sec:notation}

We denote vectors with bold lowercase letters, \eg, $\bm{x}\in\mathbb{R}^n$, and matrices with bold uppercase letters, \eg, $\bm{X}\in\mathbb{R}^{n\times n}$; standard (non-bold) letters denote scalars. The meaning of a subscript applied to such a symbol depends on context. A single subscript on a bold symbol, \eg, $\bm{A}_i$, indexes into a sequence or indexed family of vectors or matrices, $\{\bm{A}_i\}_{i=1}^m$. In contrast, a single subscript on the corresponding non-bold letter, \eg, $x_i$, refers to the $i$-th scalar entry of the bold vector $\bm{x}$, while a double subscript on a bold matrix, \eg, $\bm{X}_{ij}$, refers to its $(i,j)$-th scalar entry. For a bold vector and a pair of indices $i\leq j$, we use the slicing notation $[\bm{x}]_{i:j} = (x_i,x_{i+1},\dots,x_j)\in\mathbb{R}^{j-i+1}$ to denote the subvector formed by its $i$-th through $j$-th entries, inclusive. Similarly, for a bold matrix $\bm{B}$, we use $[\bm{B}]_i$ to denote its $i$-th column.
We use
\begin{equation*}
    \Sym^n = \left\{\bm{X}\in\mathbb{R}^{n\times n} : \bm{X}=\bm{X}^T\right\},
\end{equation*}
\begin{equation*}
    \PSD^n = \left\{\bm{X}\in\Sym^n : \bm{X}\succeq\bm{0}\right\},\quad \PD^n = \left\{\bm{X}\in\Sym^n : \bm{X}\succ\bm{0}\right\},
\end{equation*}
to denote the sets of $n\times n$ symmetric, symmetric positive-semidefinite, and symmetric positive-definite matrices, respectively.

For two matrices, $\bm{A},\bm{B}\in\mathbb{R}^{n\times n}$, we make use of the trace inner product, $\inner{\bm{A}}{\bm{B}} = \trace{\bm{A}^T\bm{B}}$, where $\trace{\cdot}$ and $\rank{\cdot}$ denote the usual matrix trace and rank, respectively. We use $\vect{\cdot}$ to denote the vectorization operator, which stacks the columns of a matrix into a single column vector and satisfies the identities
\begin{gather*}
    \trace{\bm{A}^T\bm{B}} = \vect{\bm{A}}^T\vect{\bm{B}}\\
    \vect{\bm{A}\bm{B}\bm{C}} = (\bm{C}^T\otimes\bm{A})\vect{\bm{B}},
\end{gather*}
where $\otimes$ denotes the Kronecker product. For a vector $\bm{x}$, $\supp{\bm{x}} = \left\{i~|~x_i\neq0\right\}$ denotes its \emph{support}, the set of indices of its non-zero entries, and $\card{\cdot}$ denotes the cardinality of a set. Unless otherwise indicated, $\|\cdot\|$ applied to a vector denotes the Euclidean ($2$-)norm, while $\|\cdot\|$ applied to a matrix denotes the Frobenius norm, $\|\bm{A}\| = \sqrt{\trace{\bm{A}^T\bm{A}}}$.

We denote the set of non-negative real numbers by $\mathbb{R}_+ = \left\{x\in\mathbb{R} : x\geq0\right\}$ and non-negative vectors by $\mathbb{R}^n_+ = \left\{\bm{x}\in\mathbb{R}^n : x_i\geq0\right\}$.

Finally, we make use of the special orthogonal and special Euclidean groups,
\begin{equation*}
    \SO{3} = \left\{\bm{R}\in\mathbb{R}^{3\times3} : \bm{R}^T\bm{R}=\bm{I},~\det(\bm{R})=1\right\},
\end{equation*}
\begin{equation*}
    \SE{3} = \left\{\bm{T} = \begin{bmatrix}\bm{R} & \bm{t}\\ \bm{0}^T & 1\end{bmatrix} : \bm{R}\in\SO{3},~\bm{t}\in\mathbb{R}^3\right\},
\end{equation*}
representing, respectively, the group of three-dimensional rotation matrices and the group of three-dimensional rigid-body transformations.

\subsection{SDP Relaxations in Robotics}\label{sec:bg-sdp-robotics}

In this section, we review some well-known background material regarding \ac{SDP} relaxations in robotics and computer vision.\footnote{For the sake of brevity, the concepts introduced in this section have been stated without proof. The interested reader is referred to~\cite{cifuentesLocalStabilitySemidefinite2022a,vandenbergheSemidefiniteProgramming1996a} and references therein for more extensive expositions.} A common starting point when dealing with such relaxations is the following, standard-form \textit{quadratically constrained quadratic program} (\ac{QCQP}):
\begin{equation}\label{opt:QCQP}
	\begin{array}{rl}
		\optValQCQP = \min\limits_{\bm{x}\in\mathbb{R}^n} & \bm{x}^T\bm{C}\bm{x} \\
		\st&\bm{x}^T\bm{A}_{i}\bm{x}= b_i, \quad \forall i =1,\dots,m\\
	\end{array}\tag{QCQP}
\end{equation}
where $\optValQCQP$ is the optimal value, $ \bm{x}\in\mathbb{R}^n $ is the optimization variable, $ \bm{C}\in\Sym^n $ represents the quadratic cost, and $ (\bm{A}_i, b_i) $ ($\bm{A}_i\in\Sym^n$) correspond to $ m $ quadratic constraints. Despite its simplicity, it turns out that any problem with polynomial cost and constraints -- including many problems in robotics -- can be expressed in this form. Moreover, this formulation naturally gives rise to the following convex \ac{SDP} relaxation, known as Shor's relaxation~\cite{shorQuadraticOptimizationProblems1987a}:
\begin{equation}\label{opt:SDP}
	\begin{array}{rl}
		\optValSDP = \min\limits_{\bm{X}\in\PSD^n} & \inner{\bm{C}}{\bm{X}} \\
		\st&\inner{\bm{A}_i}{\bm{X}}= b_i, \quad \forall i =1,\dots,m.
	\end{array}\tag{SDP}
\end{equation}
To see why the latter is a relaxation of the former, consider the fact that if we apply the (non-convex) constraint, $\rank{\bm{X}}=1$, to the relaxation, then we recover the non-convex \ac{QCQP} since:
\begin{equation*}
    \bm{X} \in \PSD^n,~\rank{\bm{X}}=1 \iff \bm{X}=\bm{x}\bm{x}^T.
\end{equation*}
The equivalence of \eqref{opt:SDP} and \eqref{opt:QCQP} then follows from the cyclic property of the trace operator.

Since \eqref{opt:SDP} is a relaxation, we have the following inequality:\footnote{Alternatively, we can show this lower bound property by noting that \eqref{opt:QCQP} and \eqref{opt:SDP} share a convex dual problem and then invoke the property of \emph{weak duality}.}
\begin{equation}\label{eqn:weak-duality}
    \optValQCQP \geq \optValSDP.
\end{equation}
We say that the relaxation is \emph{tight} if this relation holds with equality.
Whenever this is the case, there is at least one rank-one optimum of~\eqref{opt:SDP}, corresponding to the optimum of \eqref{opt:QCQP}.\footnote{If $\bm{x}$ is the solution to~\eqref{opt:QCQP}, then $\bm{X}=\bm{x}\bm{x}^T$ is a rank-one feasible point to~\eqref{opt:SDP} with the same cost.}

Even when the global minimizer of \eqref{opt:QCQP} is unique, the global minimizers of \eqref{opt:SDP} can form an entire (convex) set. As mentioned in Section~\ref{sec:intro}, when the maximum rank of all solutions in the \ac{SDP} solution set is one, we say that the relaxation is \emph{rank tight}, a more restrictive condition than tightness. We will show that this condition is important in our context because it typically implies that the \ac{SDP} solution set collapses to a single point that maps exactly to the global solution of~\eqref{opt:QCQP}.

This rank-tight condition can be checked in practice by applying interior-point methods to the \ac{SDP} because they always converge to a unique point that has \emph{maximal complementarity}, meaning that both the primal and dual solution variables have the highest rank possible~\cite{halickaConvergenceCentralPath2002}. Therefore, if the interior-point solution has a rank of one, the \ac{SDP} is rank tight.

\subsection{Certifying Optimality}\label{sec:bg-cert-opt}

When a \ac{QCQP} has a tight \ac{SDP} relaxation, then there exists a solution, $\candx$, to~\eqref{opt:QCQP}, such that
\begin{equation*}
    \optValQCQP=\candx^T\bm{C}\candx = \inner{\bm{C}}{\candx\candx^T} = \optValSDP.
\end{equation*}
That is, $\candX = \candx\candx^T$ is feasible for~\eqref{opt:SDP}. For any other local optimizer, $\bm{x}$, of~\eqref{opt:QCQP}, we have
\begin{equation}
    \bm{x}^T\bm{C}\bm{x}\geq\optValQCQP=\optValSDP
\end{equation}
so the solution, $\candx$, must be globally optimal.

There are potentially two ways that a globally optimal solution can be found: by directly solving the \ac{SDP} or by certifying a given candidate \ac{QCQP} solution. In both cases, we attempt to show that a pair, $(\candX, \candx)$, satisfies the relationship described above.

\subsubsection{Direct Solve}
In the first approach, we directly solve~\eqref{opt:SDP} -- for instance, using an interior-point solver -- to obtain the solution, $\candX$. If the rank of this solution is one, then we can recover a globally optimal solution to the \ac{QCQP} via factorization, $\candX = \bm{x}^*\bm{x}^{*T}$. As shown above, global optimality is trivially guaranteed since the relaxation optimal cost is exactly the same as the non-convex optimal cost. This is the case whenever the relaxation is rank tight.

On the other hand, if the rank of the solution is greater than one, then we can attempt to recover a solution either by applying the \emph{rank-reduction technique} of~\cite{lemonLowRankSemidefiniteProgramming2016}\footnote{This technique is guaranteed to find the low-rank solution whenever the relaxation is tight. Though the algorithm is technically NP-hard, we have observed empirically that it is efficient in practice.} or by applying a \emph{rounding technique} to recover a near-optimal solution~\cite{rosenSESyncCertifiablyCorrect2019a,yangTEASERFastCertifiable2021a}.

\subsubsection{Local-Solve-Then-Certify}
Since direct \ac{SDP} solvers are typically too computationally heavy for the real-time requirements of robotics, efficient certifiable methods apply the second approach; given an existing candidate solution, $\candx$, we attempt to show that its lifted form, $\candX = \candx \candx^T$, constitutes a solution to~\eqref{opt:SDP}. Since~\eqref{opt:SDP} is convex, we need only show that there exist dual variables $(\bm{H}, \bm{\lambda})$ such that the \ac{KKT} conditions of the convex relaxation are satisfied:
\begin{equation}\label{eqn:SDP-KKT}
    \begin{gathered}
        \candX\in\PSD^n,\quad \inner{\bm{A}_i}{\candX}= b_i,~ \forall i =1,\dots,m,\\
        \bm{H}\in\PSD^n,\quad \bm{H} = \bm{C} + \sum\limits_{i=1}^{m} \bm{A}_i \lambda_i, \\
        \inner{\bm{H}}{\candX}=0.
    \end{gathered}
\end{equation}

$\candX$ satisfies the first set of constraints by construction. Since $\bm{H}\in\PSD^n$, the last two constraints form a linear equation in $\bm{\lambda}$,
\begin{equation}\label{eqn:lin-eq-dual-mults}
    \inner{\bm{H}}{\candX} = 0 \iff \bm{H}\candx = \bm{C}\candx + \sum\limits_{i=1}^m \lambda_i \bm{A}_i\candx = \bm{0}.
\end{equation}
Assuming that this linear equation has a unique solution for $\bm{\lambda}$, we can use it to construct $\bm{H}$. Certification of the global optimality of $\candx$ then boils down to testing whether the so-called \emph{certificate matrix} is \ac{PSD}. In practice, the optimality conditions are tested using numerical tolerances; once the multipliers, $\bm{\lambda}$, have been found and the certificate matrix, $\bm{H}$, has been constructed, global optimality is claimed if
\begin{equation}
    \vert\inner{\bm{H}}{\candX}\vert = \vert \candx^T \bm{H} \candx \vert \leq \tau_c~\mbox{ and }~\hat{\bm{H}} + \bm{I}\tau_p \in \PSD^n,
\end{equation}
where $\tau_c>0$ and $\tau_p>0$ are small numerical tolerances. Note that the absolute value on the first term of our test is required since $\bm{H}$ is not necessarily \ac{PSD}.

This second technique has been applied to achieve real-time certified solutions to state-estimation problems in robotics~\cite{rosenSESyncCertifiablyCorrect2019a} and computer vision~\cite{garcia-salgueroFastRobustCertifiable2021}.
Unfortunately, even when the relaxation is tight, its low-rank candidate solution is often \emph{primal degenerate}, meaning that infinitely many dual solutions correspond to it (\ie, \eqref{eqn:lin-eq-dual-mults} is underdetermined)~\cite{alizadehComplementarityNondegeneracySemidefinite1997a}.
This degeneracy is directly induced by the redundant constraints that are added to obtain tightness in the first place~\cite{dumbgenGloballyOptimalState2024,yangTEASERFastCertifiable2021a}.

In this degenerate case, the dual variables can only be found by solving an \ac{SDP}, which may be just as onerous as the direct approach above. Due to their robustness to issues such as degeneracy~\cite{alizadehPrimalDualInteriorPointMethods1998}, the \emph{de facto} approach for solving \ac{SDP} relaxations is based on \emph{interior-point methods}, which are the topic of the next section.

\subsection{Interior-Point Methods}\label{sec:bg-int-point}

In this section, we review certain ideas from the interior-point literature that are germane to the certification technique that we introduce in subsequent sections. In a nutshell, interior-point methods work by removing hard (generalized) inequality constraints and replacing them with a \emph{penalty or barrier function} in the objective that forces the inequalities to be strictly obeyed. In the context of \acp{SDP}, the inequality constraint corresponds to the cone constraint, $\bm{X}\succeq\bm{0}$, and the most commonly used barrier function is the \emph{log-determinant barrier}, $-\log\det(\bm{X})$. When applied to~\eqref{opt:SDP}, we obtain
\begin{equation}\label{opt:SDP-IPM}
	\begin{array}{rl}
		\min\limits_{\bm{X}\in\Sym^n} & \inner{\bm{C}}{\bm{X}}-\mu\log\det(\bm{X}) \\
		\st&\inner{\bm{A}_i}{\bm{X}}= b_i, \quad \forall i =1,\dots,m.
	\end{array}\tag{IPM}
\end{equation}
Here, the barrier function ensures that the eigenvalues of $\bm{X}$ are all strictly positive (\ie, $\bm{X}\in\PD^n$, the interior of the \ac{PSD} cone). The parameter $\mu\in(0,\infty)$ -- referred to as the \emph{barrier parameter} -- controls the influence of the barrier function on the optimization. In general, this parameter is decreased as the optimization progresses, according to a particular update strategy.

The trajectory, $\bm{X}(\mu)$, traced out by the (unique) solution of~\eqref{opt:SDP-IPM} as $\mu$ is varied is referred to as the \emph{central path} and will play a key role in our certification approach. As mentioned above, it can be shown that as $\mu\rightarrow0$, the central path converges to a point in the optimal (primal-dual) solution set for~\eqref{opt:SDP} that has maximal complementarity~\cite{halickaConvergenceCentralPath2002}. Note, however, that the central path does not include this limit point since it is almost never full rank, meaning that the log-determinant objective of~\eqref{opt:SDP-IPM} would be ill-defined. In practice, exact low-rank solutions are found by rounding techniques (\eg, truncating small eigenvalues).

Conversely, as $\mu\rightarrow\infty$,~\eqref{opt:SDP-IPM} becomes independent of the \ac{SDP} cost function and its solution converges to the so-called \emph{analytic center} of the feasible set~\cite{boydConvexOptimization2004a},
\begin{equation}\label{eqn:analytic-center}
    \begin{array}{rl}
    \bm{X}_{\subscr{AC}} = \arg\min\limits_{\bm{X}\in\Sym^n}-&\log\det(\bm{X})\\
    \st& ~\inner{\bm{A}_i}{\bm{X}}= b_i, \quad \forall i =1,\dots,m.
    \end{array}
\end{equation}

Interior-point methods for \acp{SDP} typically initialize the primal and dual so that they are well into the interior of the \ac{PSD} cone and choose a large initial barrier parameter accordingly~\cite{wrightPrimaldualInteriorpointMethods1997}. Though the exact details vary from method to method, iterates generally alternate between reducing the barrier parameter and solving the (non-linear) \ac{KKT} equations of \eqref{opt:SDP-IPM},
\begin{equation}\label{eqn:kkt-ipm}
    \begin{gathered}
    \quad \LinCons{\bm{X}(\mu)}= \bm{b},\\
    \bm{H}(\mu) = \bm{C} + \LinConsAdj{\bm{\lambda}(\mu)},\\
    \bm{H}(\mu)\bm{X}(\mu)=\mu\bm{I}.
    \end{gathered}
\end{equation}

For brevity, we have introduced a compact linear constraint operator and its adjoint,
\begin{equation}
    \LinCons{\bm{X}}_i = \inner{\bm{A}_i}{\bm{X}}, \quad \LinConsAdj{\bm{\lambda}} = \sum\limits_{i=1}^{m}\bm{A}_i \lambda_i,
\end{equation}
and have collected the constraint values and Lagrange multipliers into $\bm{b}\in\mathbb{R}^m$ and $\bm{\lambda}\in\mathbb{R}^m$, respectively.
Iterations continue until both the residuals of these equations and the barrier parameter are below appropriate thresholds.

We noted above that the main impediment to fast certification of a global optimum was the fact that the \ac{KKT} conditions, \eqref{eqn:SDP-KKT}, were not uniquely solvable with respect to the dual variables. Conversely, any point along the central path has dual variables, $(\bm{H}(\mu),\bm{\lambda}(\mu))$, that are uniquely defined by the \ac{KKT} conditions of~\eqref{opt:SDP-IPM}. Further, in the limit as $\mu\rightarrow 0$, these dual variables satisfy the \ac{KKT} conditions of~\eqref{opt:SDP}~\cite{klerkAspectsSemidefiniteProgramming2004}.

\section{Central Path Certifier}\label{sec:cp-cert}

The uniqueness of the dual variables along the central path motivates our main strategy in this work: we seek to find a certificate of optimality (\ie, the dual variables) by searching for a point along the central path that is `close enough' to our candidate solution. Note that our context is quite different from the standard interior-point setting; though we assume that we already have the primal solution exactly, we have no way to initialize the dual variables. As such, we specialize our method to exploit our existing knowledge of the primal solution.

\begin{figure}
    \centering
    \includegraphics[width=\columnwidth]{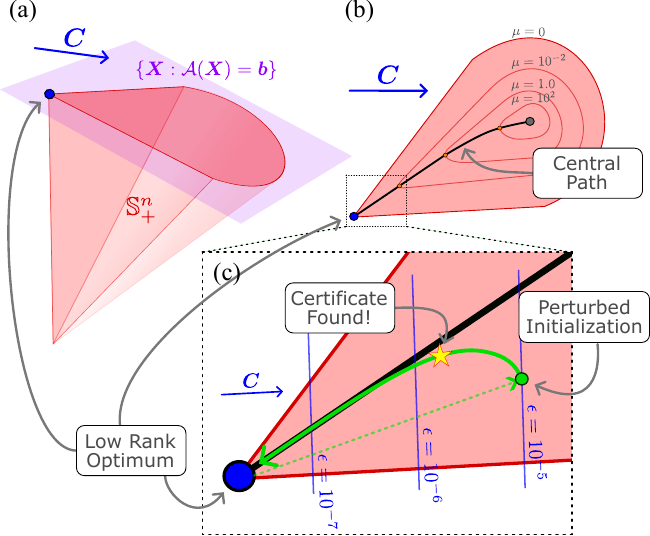}
    \caption{Core concepts of the \ac{CPCert} algorithm. (a) A three-dimensional representation of the feasible set at the intersection between the \ac{PSD} cone, $\PSD^n$, and linear constraint set, which satisfies $\mathcal{A}(\bm{X})=\bm{b}$. The linear cost function is represented with a blue arrow. (b) A two-dimensional representation of the feasible set demonstrating level sets of the barrier function (red lines) and the central path (black), which passes through the optimal solutions for~\eqref{opt:SDP-IPM} for each value of the barrier parameter, $\mu$. (c) The `round trip' of \ac{CPCert}; the low-rank solution is first perturbed towards the center of the \ac{PSD} cone (green dash), and then iterates move back towards the solution along the central path (green solid). Level sets of the cost constraint for different values of $\epsilon$ are shown as blue lines. Note that convergence is not necessarily required to find an approximate certificate (yellow star).}
    \label{fig:central-path-diag}
\end{figure}

Figure~\ref{fig:central-path-diag} presents the key concepts of the \acf{CPCert}. To obtain a certificate of optimality, we first perturb our existing (primal) solution towards the interior of the \ac{PSD} cone. We then iteratively move the perturbed solution towards the central path and converge along the central path back to our original solution. Though this `round trip' is redundant in terms of the primal variables, it will provide us with the dual variables that will certify the candidate solution.

As mentioned in Section~\ref{sec:bg-int-point}, exact convergence to the limit point of the central path cannot actually be achieved in practice. However, it turns out that we will not need to converge exactly to the low-rank solution to recover a viable global-optimality certificate. In practice, we only need to get close enough to the low-rank solution to extract an approximately valid certificate and can leverage early-stopping criteria to make our method more performant.

The following assumption is critical to our framework:
\begin{assumption}\label{assump:cand-on-path}
    We assume that if $\candx$ is globally optimal, then $\candX=\candx\candx^T$ is the limit point of the primal central path as $\mu\rightarrow0$.
\end{assumption}
\noindent The next proposition provides a relevant sufficient condition for this assumption:
\begin{proposition}\label{prop:rank-tight-suff}
    Assumption~\ref{assump:cand-on-path} is satisfied whenever the \ac{SDP} relaxation is rank tight and there is at least one constraint for which $b_i\not=0$.
\end{proposition}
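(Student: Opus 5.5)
Throughout, I take for granted that the central path exists, i.e., \eqref{opt:SDP-IPM} has a unique solution for every $\mu>0$; this implicitly requires a strictly feasible primal and dual. The plan combines the maximal-complementarity convergence result of~\cite{halickaConvergenceCentralPath2002}, recalled in Section~\ref{sec:bg-int-point}, with a short argument that rank tightness, together with a nonzero $b_i$, forces the \ac{SDP} solution set to be the single point $\candX=\candx\candx^T$.

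\emph{Step 1 (limit point exists and is optimal).} By~\cite{halickaConvergenceCentralPath2002}, the primal central path $\bm{X}(\mu)$ converges as $\mu\rightarrow0$ to a point $\bm{X}^\star$ in the optimal set of~\eqref{opt:SDP}. This point lies in the relative interior of the optimal face, i.e., it has maximal rank among all optimal solutions.

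\emph{Step 2 (optimal set is a singleton).} Let $\mathcal{F}$ be the optimal set of~\eqref{opt:SDP}; it is a convex face of the feasible set. Since $\candx$ is globally optimal and the relaxation is tight, $\candX\in\mathcal{F}$. Take any $\bm{Y}\in\mathcal{F}$. By convexity, $\bm{Z}=\tfrac12(\candX+\bm{Y})\in\mathcal{F}$. Rank tightness gives $\rank{\bm{Z}}\leq1$. Because $\candX,\bm{Y}\succeq\bm{0}$, the range of $\bm{Z}$ contains the ranges of both $\candX$ and $\bm{Y}$, so $\bm{Y}=\alpha\candx\candx^T$ for some $\alpha\geq0$. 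Feasibility requires $\inner{\bm{A}_i}{\bm{Y}}=\alpha b_i=b_i$ for every $i$. Choosing an index with $b_i\neq0$ yields $\alpha=1$, so $\bm{Y}=\candX$.

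I need $\candx\neq\bm{0}$ for this range argument. This holds because $\bm{0}$ is infeasible whenever some $b_i\neq0$. This is exactly where the nonzero-$b_i$ hypothesis enters: without it, the set of scalings $\alpha$ could be a nontrivial ray.

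\emph{Step 3 (conclusion).} Since $\mathcal{F}=\{\candX\}$, the limit point from Step 1 must satisfy $\bm{X}^\star=\candX$, which is Assumption~\ref{assump:cand-on-path}.

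The main obstacle I expect is Step 1. The convergence result needs strict feasibility of both the primal and the dual so that the central path is well defined and bounded. I would state this as a standing hypothesis implicit in the definition of the central path, rather than derive it. The rank argument in Step 2 is elementary.
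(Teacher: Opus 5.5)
Your proof is correct and takes essentially the paper's route: show that the \ac{SDP} optimal set is the singleton $\{\candX\}$ using rank tightness together with a nonzero $b_i$ to fix the scaling, then invoke the central-path convergence result of~\cite{halickaConvergenceCentralPath2002}. Your midpoint/range-inclusion argument is a compact version of the paper's split into parallel and non-parallel cases, and the strict-feasibility hypothesis you flag for the existence of the central path is one the paper also relies on implicitly.
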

\noindent This proposition is proven in Appendix~\ref{sec:app:proof-rank-tight}. For the problems studied in this paper, we note that there is always a constraint for which some $b_i$ is non-zero, and we verify rank tightness empirically for each problem.

\subsection{An Equivalent Interior-Point Problem}\label{sec:equiv-ipm}

Standard interior-point solvers initialize the barrier parameter based on the value of both the primal and the dual variables~\cite{nocedalNumericalOptimization2006b}. Since we wish to avoid explicit initialization of these dual variables, our first step is to reformulate the problem in terms of a different parameter that is more straightforward to initialize.

Taking inspiration from Lemma 1.2 of~\cite{halickaConvergenceCentralPath2002}, we note that the central path can be equivalently defined by moving the \ac{SDP} cost function from the objective into a `cost constraint',
\begin{equation}
	\begin{array}{rl}
		\min\limits_{\bm{X}\in\Sym^n} & -\log\det(\bm{X}) \\
		\st&\inner{\bm{A}_i}{\bm{X}}= b_i, \quad \forall i =1,\dots,m,\\
        & \inner{\bm{C}}{\bm{X}} = \rho(\mu),
	\end{array}
\end{equation}
where $\rho(\mu) = \inner{\bm{C}}{\bm{X}(\mu)}$ is the objective of~\eqref{opt:SDP} along the central path with respect to the barrier parameter.
At this point, we have gained nothing since the constraint is still a function of the barrier parameter. However, the right side of this constraint can be replaced by a positive constant as long as it respects the boundaries of the objective of~\eqref{opt:SDP} along the central path. We define the equivalent problem,
\begin{equation}\label{opt:CPCert}
	\begin{array}{rl}
		\min\limits_{\bm{X}\in\Sym^n} & -\log\det(\bm{X}) \\
		\st&\inner{\bm{A}_i}{\bm{X}}= b_i, \quad \forall i =1,\dots,m,\\
        & \inner{\bm{C}}{\bm{X}} =\optValSDP+\epsilon\rho_c,
	\end{array}\tag{CP}
\end{equation}
where $\rho_c = |\trace{\bm{C}}|$ is a normalization constant and $\epsilon\in(0,\bar{\epsilon})$, where
\begin{equation}\label{eqn:eps-bound}
    \optValSDP+\bar{\epsilon}\rho_c = \inner{\bm{C}}{\bm{X}_{\subscr{AC}}}
\end{equation}
corresponds to the upper bound of the objective of~\eqref{opt:SDP} at the analytic center of its feasible set.
In Appendix~\ref{sec:app:equiv-ipm}, we prove that the solution of~\eqref{opt:CPCert} is exactly the central path and that, as $\epsilon\rightarrow0$, the solution of~\eqref{opt:CPCert} converges to the optimal primal-dual variables for~\eqref{opt:SDP}.
However, in contrast to the barrier parameter, we provide an initialization for $\epsilon$ below that depends only on the initial primal variable and works well empirically.

The \ac{KKT} conditions of this problem are given by
\begin{equation}\label{eqn:cp-kkt}
    \quad \LinCons{\bm{X}}= \bm{b}(\epsilon),\quad \bm{S} = \LinConsAdj{\bm{y}}, \quad\bm{S}\bm{X}=\bm{I},
\end{equation}
where, to account for the cost constraint, we have increased the dimension of the Lagrange multiplier vector, $\bm{y}\in\mathbb{R}^{m+1}$, and have updated our operators as follows:
\begin{gather}
    \LinCons{\bm{X}}_i =\begin{cases} \inner{\bm{A}_i}{\bm{X}}, &i=1,\dots,m\\ \inner{\bm{C}}{\bm{X}},&i=m+1 \end{cases}, \\
    \LinConsAdj{\bm{y}} = \bm{C}y_{m+1}+\sum\limits_{i=1}^{m}\bm{A}_i y_i,\\
    \bm{b}(\epsilon)^T = \begin{bmatrix} \bm{b}^T & \optValSDP+\epsilon\rho_c \end{bmatrix}.
\end{gather}

As shown in Appendix~\ref{sec:app:equiv-ipm}, the \ac{KKT} equations above are equivalent to those of~\eqref{opt:SDP-IPM}, with $\mu = \frac{1}{y_{m+1}}$. Accordingly, we can also recover the corresponding central-path dual variables,
\begin{equation}\label{eqn:dual-recovery}
    \lambda_i = \frac{y_i}{y_{m+1}}~\forall i=1,\dots,m,\quad \bm{H} = \frac{1}{y_{m+1}}\bm{S}.
\end{equation}

\subsection{The CP-Cert Algorithm}\label{sec:cp-cert-algo}

Our \ac{CPCert} algorithm seeks to solve~\eqref{opt:CPCert} while simultaneously decreasing the $\epsilon$ parameter. As in most certification approaches, we assume at the outset that the optimizer of~\eqref{opt:SDP} is the candidate solution, $\candX = \candx\candx^T$, and that $\optValSDP = \inner{\bm{C}}{\candX}$.

We initialize our algorithm at the candidate solution, $\candX$, offset by the identity matrix multiplied by a small perturbation,
\begin{equation}\label{eqn:init-primal}
    \bm{X}_o = \candX + \delta\bm{I},
\end{equation}
where $\delta>0$ is the perturbation parameter. In effect, this adds perturbation $\delta$ to all eigenvalues of $\candX$, ensuring that the initialization stays close to the candidate solution and satisfies $\bm{X}_o\in\PD^n$. To match the scale of the perturbation, we select the initial parameter, $\epsilon_0 $, to be equal to this perturbation, $\epsilon_0 = \delta$. Note that this value is typically well below the bound given in~\eqref{eqn:eps-bound}.

Our choice of initial primal variable will usually violate some of the constraints, resulting in a residual in the~\ac{KKT} equations,~\eqref{eqn:cp-kkt}.
As in standard interior-point methods, our main approach is to apply Newton's method to iteratively reduce this residual, thereby obtaining a solution on the central path.
At the same time, we adaptively reduce the parameter, $\epsilon$, to converge to the limit point of the central path, which, by assumption, is the candidate solution.
Throughout this process, we recover the dual variables of~\eqref{opt:CPCert} and, at convergence, the optimal dual variables of~\eqref{opt:SDP}. Following Section~\ref{sec:bg-cert-opt}, if Assumption~\ref{assump:cand-on-path} holds, these dual variables exactly certify global optimality of $\candx$.

At each iteration, our goal is to find a step $d\bm{X}$ such that $\bm{X}^+ = \bm{X} + d\bm{X}$ reduces the residual of the \ac{KKT} equations and ensures that $\bm{X}^+\in\PD^n$. To avoid explicit initialization of the dual variables, we adopt the so-called primal-only interior-point method~\cite[Chapter 5]{klerkAspectsSemidefiniteProgramming2004} and eliminate the dual variable, $\bm{S}$, from the \ac{KKT} equations:
\begin{equation*}
    \bm{X}^{-1} = \LinConsAdj{\bm{y}},\quad  \LinCons{\bm{X}}= \bm{b}(\epsilon).
\end{equation*}
Next, we linearize these equations\footnote{For the derivation of the linearization of a matrix inverse, see~\cite{magnusMatrixDifferentialCalculus2019a}.} to obtain
\begin{equation}\label{eqn:pert-kkt}
    \begin{gathered}
    \bm{X}^{-1} - \bm{X}^{-1} d\bm{X} \bm{X}^{-1} = \LinConsAdj{\bm{y}},\\
    \LinCons{\bm{X}+d\bm{X}}= \bm{b}(\epsilon).
    \end{gathered}
\end{equation}
Pre- and post-multiplying the first equation by $\bm{X}$ and rearranging, we obtain a relation between the primal step, $d\bm{X}$, and the Lagrange multipliers,
\begin{equation}\label{eqn:primal-step}
    d\bm{X} = \bm{X} - \bm{X}\LinConsAdj{\bm{y}}\bm{X}.
\end{equation}
To find the multipliers, we substitute equation~\eqref{eqn:primal-step} into the second equation of~\eqref{eqn:pert-kkt} to obtain
\begin{equation}\label{eqn:schur-comp-sys}
    \LinCons{\bm{X}\LinConsAdj{\bm{y}}\bm{X}} = 2\LinCons{\bm{X}} - \bm{b}(\epsilon).
\end{equation}
This equation is linear in $\bm{y}$ and can be written as $\Schur\bm{y} = \bm{d}$, where
\begin{equation}
    \begin{gathered}
        \Schur_{ij} = \trace{\bm{B}_i \bm{X} \bm{B}_j \bm{X}}, \quad \bm{d}_i = 2\inner{\bm{B}_i}{\bm{X}} - \bm{b}(\epsilon)_i, \\
        \bm{B}_i = \begin{cases}
            \bm{A}_i& i\in1,\dots,m, \\
            \bm{C} & i=m+1.
        \end{cases}
    \end{gathered}
\end{equation}
This system -- known as the Schur-complement system in the optimization literature~\cite[Chapter 10.3]{boydConvexOptimization2004a} -- constitutes the main computational bottleneck of our method, and solving it efficiently is the subject of Section~\ref{sec:eff-lin-solve}.

Once the Schur-complement system is solved, the multipliers can be used to compute the primal step from~\eqref{eqn:primal-step}. The primal variable is then updated,
\begin{equation}\label{eqn:primal-update}
    \bm{X}^+ = \bm{X} + \alpha~ d\bm{X},
\end{equation}
where $\alpha$ is selected by a backtracking line search, ensuring that $\bm{X}^+\in\PD^n$. To test this condition, we use an efficient LDL factorization. Starting from an initial step size, $\alpha_0$, we repeatedly shrink $\alpha$ by a factor $\sigma_\alpha\in(0,1)$ until $\bm{X}+\alpha~d\bm{X}\in\PD^n$ or a minimum step size, $\alpha_{\min}$, is reached, in which case we terminate with failure. Algorithm~\ref{alg:line-search} summarizes this procedure.
\begin{algorithm}
    \caption{LineSearch}\label{alg:line-search}
    \begin{algorithmic}[1]
        \Require Primal iterate $\bm{X}$, primal step $d\bm{X}$
        \Function{LineSearch}{$\bm{X}$, $d\bm{X}$}
            \State $\alpha \gets \alpha_0$
            \State $d_{\min}\gets-\infty$
            \While{$d_{\min}\leq0$}
                \State $\bm{L}_x,~\bm{D}_x\gets$\textproc{LDL}$(\bm{X} + \alpha~d\bm{X})$
                \State $d_{\min}\gets\min(\diag{\bm{D}_x})$\Comment{update min diagonal}
                \If{$d_{\min}\leq0$}
                    \State $\alpha \gets \sigma_\alpha~\alpha$ \Comment{backtrack}
                    \If{$\alpha \leq \alpha_{\min}$}
                        \State \Return $\alpha_{\min}$ \Comment{reached minimum step size}
                    \EndIf
                \EndIf
            \EndWhile
            \State \Return $\alpha$
        \EndFunction
    \end{algorithmic}
\end{algorithm}

At each iteration, we apply an adaptive update law for $\epsilon$, based on the line-search parameter:
\begin{equation}\label{eqn:adapt-param}
    \begin{gathered}
        \sigma = \begin{cases}
            \sigma_{\subscr{\mbox{inc}}} & \alpha \leq \alpha_{\subscr{\mbox{inc}}}\\
            \sigma_{\subscr{\mbox{dec}}} & \alpha \geq \alpha_{\subscr{\mbox{dec}}} \\
            1 & \mbox{o.w.}
        \end{cases},\\
        \epsilon^+ = \max\{\sigma \epsilon, \epsilon_{\min}\},
    \end{gathered}
\end{equation}
where $\epsilon_{\min}$ is the smallest allowable reduction of $\epsilon$ from its initial value. Default parameter values are provided in Table~\ref{tab:parameters} unless otherwise specified.
Intuitively, if the step size is restricted because the iterate is too close to the boundary of the \ac{PSD} cone, then $\epsilon$ is increased to target a central-path point that is farther from the boundary. Conversely, if large steps are being made, then it is safe to reduce $\epsilon$.

\subsection{Stopping Conditions}

As mentioned above, we can recover a global-optimality certificate as soon as the iterates are close enough to a point on the central path that is near the candidate optimum. At every iteration, we use the current multipliers to construct a candidate certificate matrix based on~\eqref{eqn:dual-recovery},
\begin{equation}
    \hat{\bm{H}} = \frac{1}{y_{m+1}}\LinConsAdj{\bm{y}} = \bm{C} + \sum\limits_{i=1}^m \bm{A}_i \frac{y_i}{y_{m+1}}.
\end{equation}
If this matrix satisfies the \ac{KKT} conditions in~\eqref{eqn:SDP-KKT} for $\candX$, then we have successfully obtained a certificate of optimality for $\candx$. We first check the complementarity condition to a numerical tolerance, $\tau_c$:
\begin{equation}
    \vert\inner{\hat{\bm{H}}}{\candX}\vert = \vert \candx^T \hat{\bm{H}} \candx \vert \leq \tau_c.
\end{equation}
If this relation is true, we next test if $\hat{\bm{H}} + \bm{I}\tau_p \in \PSD^n$, for small tolerance $\tau_p>0$, by attempting a Cholesky decomposition~\cite{papaliaCertifiablyCorrectRangeAided2024}. Note that, by construction, \ac{CPCert} can only certify optimality if it finds a set of Lagrange multipliers that satisfy the~\ac{KKT} conditions discussed in Section~\ref{sec:bg-cert-opt}.

On the other hand, if either the \ac{SDP} relaxation is not rank tight or the candidate solution is not globally optimal, the primal iterate, $\bm{X}$, will diverge from the candidate solution, $\candX$. To robustly detect this condition, we use the angle between the symmetric matrices, $\candx\candx^T$ and $\bm{X}$, which is well defined via the Frobenius inner product~\cite{andreaniGeometricalStructureSymmetric2013}:
\begin{equation}
    \theta(\bm{X},\candx) = \arccos \left(\frac{\candx^T\bm{X}\candx}{\|\bm{X}\| ~\|\candx\candx^T\|}\right).
\end{equation}
If the absolute value of the angle exceeds a threshold, $\theta_{\max}>0$, we terminate the algorithm and claim that it has failed to certify the solution. This criterion serves as an early-stopping condition when either the candidate is not globally optimal or the relaxation is not rank tight.

We stop iterating when the magnitude of the primal step has dropped below a given tolerance,
\begin{equation}
    \| d\bm{X} \| \leq \tau_{\mbox{\scriptsize step}},
\end{equation}
and $\epsilon$ is at a predefined minimum value, $\epsilon = \epsilon_{\min}$.\footnote{In our implementation, we define $\epsilon_{\min}$ relative to the initial value of $\epsilon$; that is, we define a ratio $\tilde{\epsilon}_{\min}$, so that $\epsilon_{\min} = \tilde{\epsilon}_{\min}\epsilon_0$.} We also limit the maximum number of iterations to $K_{\max}$. Termination in either of these cases leads to a failure to certify.

Algorithm~\ref{alg:cp-cert} summarizes the full \ac{CPCert} algorithm.
\begin{algorithm}
    \caption{Central Path Certifier (CP-Cert)}\label{alg:cp-cert}
    \begin{algorithmic}[1]
        \Require Candidate solution $\candx$
        \Ensure Certificate $\hat{\bm{H}}$ verifying global optimality of $\candx$, or failure to certify
        \State $\bm{X} \gets \candx\candx^T + \delta\bm{I}$ \Comment{initialize primal}
        \State $\epsilon\gets\epsilon_0$ \Comment{initialize parameter}
        \State $\bm{L}_p,\bm{D}_p\gets $\textproc{LDL}$(\Precond_{\subscr{\mbox{aug}}})$ \Comment{factorize system~\eqref{eqn:aug-schur-sys}}
        \For{$k = 1, \dots, K_{\max}$}
            \State \textit{Primal Update:}
            \State $\bm{y}\gets $\textproc{PrecondConjGrad}$(\bm{X},\bm{y},\epsilon,\bm{L}_p,\bm{D}_p)$ \Comment{solve~\eqref{eqn:schur-comp-sys}}
            \State $\bm{S}\gets\LinConsAdj{\bm{y}}$ \Comment{store dual}
            \State  $d\bm{X}\gets \bm{X} - \bm{X}\bm{S}\bm{X}$ \Comment{compute step via~\eqref{eqn:primal-step}}
            \State $\alpha\gets$\textproc{LineSearch}$(\bm{X},d\bm{X})$ \Comment{compute step size}
            \State $\bm{X}\gets\bm{X} + \alpha~d\bm{X}$ \Comment{update solution}
            \State $\sigma\gets\begin{cases}
            \sigma_{\subscr{\mbox{inc}}} & \alpha \leq \alpha_{\subscr{\mbox{inc}}}\\
            \sigma_{\subscr{\mbox{dec}}} & \alpha \geq \alpha_{\subscr{\mbox{dec}}} \\
            1 & \mbox{o.w.}
            \end{cases}$ \Comment{adapt. via~\eqref{eqn:adapt-param}}
            \State $\epsilon\gets\max\{\sigma\epsilon,\epsilon_{\min}\}$ \Comment{update param.}
            \State \textit{Stopping criteria:}
            \State $\hat{\bm{H}}\gets\bm{S} / y_{m+1}$ \Comment{get candidate certificate via~\eqref{eqn:dual-recovery}}
            \If{$\vert\candx^T\hat{\bm{H}}\candx\vert \le \tau_c$}
                \If {$\mbox{Cholesky}(\hat{\bm{H}} + \tau_p\bm{I})$ succeeds}
                    \State \Return $\hat{\bm{H}}$  \Comment{success: certificate found}
                \EndIf
            \EndIf
            \If{$|\theta(\bm{X}, \candx)| \ge \theta_{\max}$}
                \State \Return failure \Comment{iterate diverged from $\candx$}
            \EndIf
            \If{$\|d\bm{X}\| \le \tau_{\mbox{\scriptsize step}}$ \textbf{and} $\epsilon = \epsilon_{\min}$}
                \State \Return failure \Comment{converged without certifying}
            \EndIf
        \EndFor
        \State \Return failure \Comment{max iterations reached}
    \end{algorithmic}
\end{algorithm}

\subsection{Solving the Linear System Efficiently}\label{sec:eff-lin-solve}

Many state-of-the-art interior-point solvers directly factorize and solve the full set of \ac{KKT} conditions given in~\eqref{eqn:kkt-ipm} at each iteration.
Though this \emph{primal-dual} approach is highly robust~\cite{alizadehPrimalDualInteriorPointMethods1998}, its runtime complexity is intractable for many \ac{SDP} relaxations in robotics, which are often high-dimensional and have a large number of constraints~\cite{yangCertifiablyOptimalOutlierRobust2023a}.
Instead, we take the approach of recent scalable interior-point solvers~\cite{habibiLoraineInteriorpointSolver2024,zhangModifiedInteriorpointMethod2017,tohSolvingLargeScale2004} and solve the Schur-complement system,~\eqref{eqn:schur-comp-sys}, indirectly via the preconditioned conjugate-gradient method~\cite{nocedalNumericalOptimization2006b}.

Conjugate gradient is an iterative method that only requires a means to form the matrix-vector products, $\Schur\bm{y}$. As such, this method enjoys increased scalability compared to direct methods where the Schur-complement system needs to be explicitly formed and factorized.

Given the $k$-th iterate, $\bm{y}^{(k)}$, we efficiently form the vector, $\bm{z}^{(k)}=\Schur\bm{y}^{(k)}$, in three steps via~\eqref{eqn:schur-comp-sys}:
\begin{equation*}
\begin{gathered}
    \bm{S}^{(k)}=\LinConsAdj{\bm{y}^{(k)}} = \sum\limits_{i=1}^{m+1} \bm{B}_i y^{(k)}_i, \\
    \bm{Y}^{(k)} = \bm{X} \bm{S}^{(k)} \bm{X}, \\
    z^{(k)}_i = \trace{\bm{B}_i \bm{Y}^{(k)}}, ~\forall~i=1,\dots,m+1.
\end{gathered}
\end{equation*}
This operation has linear complexity in $m$ and allows us to fully leverage the sparsity of the constraint matrices as well as efficient parallelization of the final step. To reduce the number of iterations until convergence, we initialize the conjugate-gradient method with multipliers from the previous iteration of the main algorithm.\footnote{On the first iteration, we initialize with zeros.}

An unfortunate downside of the conjugate-gradient method is that it is highly susceptible to instability when the matrix, $\Schur$, becomes ill-conditioned, which is inevitably the case as interior-point methods converge to their solution~\cite{habibiLoraineInteriorpointSolver2024,wrightIllConditioningComputationalError1998}. To see why this is the case in our setting, consider the following alternate form of the Schur-complement matrix:
\begin{equation}
    \Schur = \VecCons^T(\bm{X}\otimes\bm{X})\VecCons,\quad [\VecCons]_i = \vect{\bm{B}_i},
\end{equation}
where $\VecCons$ is the matrix formed by concatenating the vectorized $\bm{B}_i$ matrices. Though $\VecCons$ typically has full column rank,\footnote{This is true under the standard constraint qualifications for \acp{SDP} coupled with the fact that $\bm{C}$ is typically linearly independent of the constraint matrices $\bm{A}_i$.} $\bm{X}$ converges to a low-rank solution and its resulting ill-conditioning is amplified by the Kronecker product. As such, we need an effective way to precondition the linear system.

\subsection{An Effective Preconditioner}

Several works deal with the ill-conditioning above by applying the \emph{preconditioned conjugate-gradient} method~\cite{habibiLoraineInteriorpointSolver2024,tohSolvingLargeScale2004}. An effective preconditioner, $\Precond$, is a symmetric, positive-definite operator that closely approximates the spectrum of the matrix that it is preconditioning~\cite{nocedalNumericalOptimization2006b}. This allows one to apply the conjugate-gradient method to the preconditioned system,
\begin{equation}
    \Precond^{-1} \Schur \bm{y} = \Precond^{-1}\bm{d},
\end{equation}
whose condition number is significantly lower than that of the original system, since $\Precond^{-1} \Schur \approx \bm{I}$.

Though finding an effective preconditioner for the Schur-complement matrix remains an open challenge, several recent works have resorted to finding effective \emph{local} preconditioners that are periodically recomputed as the primal variable converges~\cite{habibiLoraineInteriorpointSolver2024,zhangModifiedInteriorpointMethod2017}.
Recomputing this preconditioner often leads to another computational bottleneck for these approaches.

The key insight in our context is that our method is designed to remain close to a target solution, $\candX$. Consequently, we can compute and factorize the preconditioner once at the beginning of execution, and reuse it for all iterations.

We note that applying a standard diagonal (or Jacobi) preconditioner did not provide sufficient stability to apply \ac{PCG} effectively in our test problems. Instead, we define our preconditioner as the Schur-complement matrix evaluated at a perturbed version of our candidate solution, $\tilde{\bm{X}} = \candx\candx^T + \tau\bm{I}$ (similar to~\cite{zhangModifiedInteriorpointMethod2017}). Concretely,
\begin{equation}\label{eqn:precond-1}
    \Precond = \VecCons^T(\tilde{\bm{X}}\otimes\tilde{\bm{X}})\VecCons.
\end{equation}
We show in Appendix~\ref{sec:app:precond} that the preconditioner can be equivalently expressed as
\begin{equation}\label{precond-2}
    \Precond = \tau^2\VecCons^T\VecCons + \bm{V}\bm{V}^T,
\end{equation}
where $\bm{V}\in \mathbb{R}^{(m+1)\times (n+1)}$ is given by
\begin{equation}
    \bm{V} = \begin{bmatrix}
        \VecCons^T(\candx\otimes\candx) & \sqrt{2\tau} \VecCons^T(\candx\otimes\bm{I})
    \end{bmatrix}.
\end{equation}
It was noted in~\cite{zhangModifiedInteriorpointMethod2017} that a standard `implementation trick' for improved stability when applying the inverse of this matrix to a vector, $\bm{w}$, is to instead solve the following sparse augmented linear system:\footnote{Note that the original system can be recovered from the augmented system by eliminating $\bm{z}_2$, \ie, taking the Schur complement.}
\begin{equation}\label{eqn:aug-schur-sys}
    \Precond_{\subscr{\mbox{aug}}}\bm{z}=\begin{bmatrix}
        \tau^2 \VecCons^T\VecCons & \tau \bm{V} \\ \tau \bm{V}^T & -\tau^2\bm{I}
    \end{bmatrix} \begin{bmatrix}\bm{z}_1 \\ \bm{z}_2\end{bmatrix}= \begin{bmatrix}\bm{w}\\ \bm{0}\end{bmatrix}.
\end{equation}
The inverse is then given by $\bm{z}_1$. Though the dimension of this matrix is increased by $n+1$, it is highly sparse in practice and can be efficiently factorized using a sparse LDL decomposition. In contrast to other interior-point methods~\cite{habibiLoraineInteriorpointSolver2024,zhangModifiedInteriorpointMethod2017}, we compute this factorization once at the beginning of the algorithm and reuse the (sparse) factors at each iteration. A good choice of $\tau$ when constructing the preconditioner is to set it equal to the initial primal perturbation, $\delta$.

\subsection{Implementation}

We implement \ac{CPCert} in \texttt{C++}, making extensive use of the \texttt{Eigen} library and exploiting sparsity and parallelism wherever possible.\ifanonymous
 Code will be made available should this paper be accepted.
\else
\footnote{The code implementing \ac{CPCert} is available at \url{https://github.com/utiasASRL/cp-cert-core}, including python bindings.}
\fi
\section{Application to Outlier-Robust Stereo-Camera Pose Registration}\label{sec:applications}

In this section, we introduce two key applications of our approach, which can be combined to yield a certifiable and robust pose-registration pipeline for pointclouds derived from stereo-camera data.

\begin{figure}
    \centering
    \includegraphics[width=\columnwidth]{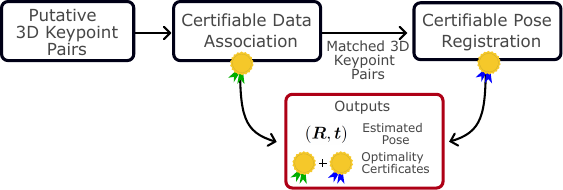}
    \caption{A simplified representation of our approach to certification of outlier-robust pointcloud registration. We assume that pointclouds and putative correspondences are generated by some preprocessing method. Data association and pointcloud registration are then performed and certified in two separate steps, resulting in an estimated pose and the optimality certificate from each component. A more detailed pipeline diagram is shown in Figure~\ref{fig:pipeline}.}
    \label{fig:pipeline-simplified}
\end{figure}

Outlier-robust pose registration establishes a relative transformation between two distinct robot poses based on sensor data collected at each pose. These sets of data are often preprocessed into a pair of distinct pointclouds, which are then used to estimate the relative pose in two key steps: data association and pointcloud registration. The former can be framed as a combinatorial optimization problem that seeks to identify correspondences between pointclouds, while the latter uses these associations to find a transformation that optimally aligns the pointclouds in three-dimensional space. Both subproblems are susceptible to convergence to spurious local minima.

To be robust, our pipeline must be able to deal with incorrect associations in addition to corruption due to measurement noise. To be certifiable, the pipeline must verify that it has converged to a globally optimal solution.

To our knowledge, existing robust and certifiable registration pipelines do not seek to certify the optimality of the data-association problem directly~\cite{yangCertifiablyOptimalOutlierRobust2023a,yangTEASERFastCertifiable2021a,korotkineGloballyOptimalDataAssociationFree2025b}. Instead, they allow outlier measurements to appear in the pointcloud-registration optimization problem and mitigate their influence using a robust-cost, M-estimation technique~\cite{mactavishAllCostsComparison2015,barfootStateEstimationRobotics2024}. Applying the so-called Black--Rangarajan duality~\cite{blackUnificationLineProcesses1996}, these works convert the registration problem into a \ac{QCQP} for which a tight \ac{SDP} relaxation exists. However, the resulting formulation requires a large number of constraints and becomes intractably large for typical problems in robotics. For example, for problems of about 200 correspondences, this approach takes about 12 s on average (cf. Figure 9 of~\cite{yangTEASERFastCertifiable2021a}).

In this section, we take a different approach; treating data association as a combinatorial optimization problem, we first find and certify global optimality of a putative set of correspondences.
Since these correspondences are certified to be correct, we then avoid the complexity of making the pose estimation robust to outliers. As a result, we can apply standard pointcloud-regression techniques, which have \ac{SDP} relaxations that can be certified in milliseconds.
In particular, we adopt the \emph{matrix-weighted} approach described in~\cite{holmesSemidefiniteRelaxationsMatrixWeighted2024a} to solve and certify the regression problem, which allows one to properly account for the increased depth uncertainty in pointclouds generated from stereo-camera data.

It is important to note that this two-step certification process is not equivalent to certifying the entire outlier-robust pointcloud-registration problem outright, as is done in~\cite{yangTEASERFastCertifiable2021a}. However, we maintain that our approach is more tractable, while still providing strong guarantees of solution quality.

\subsection{Certifying Pointcloud Data Association}\label{sec:cert-data-assoc}

We wish to find a set of correspondences, $\clique$, between two 3D pointclouds, $\SrcSet = \{\bm{p}_i\}_{i=1}^N$ and $\TrgSet=\{\bm{q}_j\}_{j=1}^N$.
To obtain global optimality guarantees for this set, we adopt a recent graph-theoretic paradigm for data association~\cite{shiROBINGraphTheoreticApproach2021b,luskCLIPPERGraphTheoreticFramework2021,yangTEASERFastCertifiable2021a}. We construct a \emph{consistency graph}, in which nodes correspond to putative correspondences and weighted edges encode geometric consistency between these correspondences~\cite{luskCLIPPERRobustData2024}. Given correspondences $(i,k)$ and $(j,l)$, edges are only constructed if
\begin{equation}
    \vert\delta\vert = \vert \| \bm{p}_i - \bm{p}_j\|_2 - \| \bm{q}_k - \bm{q}_l\|_2 \vert < \epsilon,
\end{equation}
where $\bm{p}_i$ and $ \bm{p}_j$ are points in the first pointcloud, $\bm{q}_k$ and $ \bm{q}_l$ are points in the second pointcloud, and $\epsilon$ is a user-defined threshold. Weights are then assigned to valid edges according to a \emph{score function},
\begin{equation}
    s(\delta) = \exp\left(-\frac{1}{2}\frac{\delta^2}{\sigma^2}\right),
\end{equation}
where $\sigma$ controls the score assigned to each association (see~\cite{luskCLIPPERRobustData2024}). The construction of the consistency graph is demonstrated in Figure~\ref{fig:consistency-graph}.

\begin{figure}[t]
    \centering
    \includegraphics[width=\columnwidth]{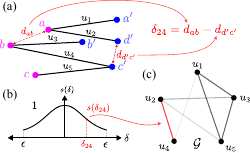}
    \caption{Construction of the consistency graph from a set of correspondences, $\{u_i\}$, between pointclouds. (a) shows the pointclouds and putative correspondences. For any two correspondences, distances between the associated points in each pointcloud (\eg, $d_{ab}$ and $d_{d'c'}$) are computed and compared across pointclouds to get a measure, $\delta$, of consistency. In (b), the score function is then applied to $\delta$, yielding the score, $s(\delta)$, which then serves as the edge weight between the two correspondences (nodes) in the consistency graph shown in (c). The best set of correspondences is then the most heavily weighted clique of the consistency graph.}
    \label{fig:consistency-graph}
\end{figure}

In this framework, the cliques (\ie, complete subgraphs) of this graph represent sets of associations that are all mutually geometrically consistent. The problem of finding the best mutually consistent clique can then be formulated as the \emph{densest edge-weighted clique (DEWC)} problem~\cite{luskCLIPPERRobustData2024},
\begin{equation}\label{opt:max-clique}
\begin{array}{rl}
    \max\limits_{\bm{u} \in \left\{ 0,1\right\}^n} & \frac{\bm{u}^T\Affinity\bm{u}}{\bm{u}^T\bm{u}} \\
    \st & u_i u_j = 0~\mbox{if}~\Affinity_{ij}=0,~\forall~i,j.
\end{array}\tag{DEWC}
\end{equation}
Here, $u_i$ is a binary variable encoding whether node $i$ is in the optimal clique, and $\Affinity$ -- referred to as the \emph{affinity matrix} -- is the (weighted) adjacency matrix of the graph with unit diagonal. The bilinear constraints on the entries of $\bm{u}$ encode the fact that the solution must be a clique of the graph.\footnote{To see why this is true, consider the fact that if $i$ and $j$ are both in a clique, then there must be an edge between them and $\Affinity_{ij}>0$. The constraint enforces the converse of this fact.}

Several works also study the \emph{unweighted} version of this problem (all edges assigned a weight of one), where the optimization reduces to the so-called maximum clique problem~\cite{shiROBINGraphTheoreticApproach2021b,fathianCLIPPERFastMaximal2024a}.\footnote{As shown in~\cite{luskCLIPPERRobustData2024}, the weighted version of the problem is more likely to have a unique optimum than the unweighted case. This is because the edge-weighting plays a ``tie-breaking'' role, which helps differentiate the clique that is the most geometrically consistent.}

To facilitate a fast local solver for this problem,~\cite{luskCLIPPERRobustData2024} introduces the \emph{maximum spectral radius clique} problem by relaxing the binary domain of this problem to the positive reals,
\begin{equation}\label{opt:MSRC}
\begin{array}{rl}
    \max\limits_{\bm{x} \in \mathbb{R}^n_+} & \bm{x}^T\Affinity\bm{x} \\
    \st & x_i x_j = 0~\mbox{if}~\Affinity_{ij}=0,~\forall~i,j, \\
    & \left\|\bm{x}\right\|^2_2 = 1.
\end{array}\tag{MSRC}
\end{equation}
\begin{remark}
    In~\cite{luskCLIPPERGraphTheoreticFramework2021} and related works, the final constraint in~\eqref{opt:MSRC} is written as an inequality, $\left\|\bm{x}\right\|^2_2 \leq 1$. We show in Appendix~\ref{sec:app:equiv-msrc} that this inequality is binding for any optimum as long as $\Affinity$ is not identically zero. We have therefore replaced it with an equality.
\end{remark}
\begin{remark}
    Similar to~\eqref{opt:max-clique}, the first constraint of~\eqref{opt:MSRC} enforces that the support of $\bm{x}$ is a clique of the data-association graph.
\end{remark}
Though the optima of this (non-convex) relaxation do not, in general, correspond to those of \eqref{opt:max-clique}, the authors of~\cite{luskCLIPPERRobustData2024} show that it is still remarkably effective for finding inlier sets. Since the solution is a continuous variable, the discrete inlier set is found via a rounding procedure~\cite{luskCLIPPERRobustData2024}.

The authors of~\cite{luskCLIPPERRobustData2024} also explore a convex relaxation of~\eqref{opt:MSRC} that is empirically tight, but largely intractable due to its dependence on the so-called \emph{doubly non-negative cone},
\begin{equation}\label{eqn:doubly-non-neg-cone}
    \mathbb{D}^n=\{\bm{X}\in\PSD^n :~ \bm{X}_{ij} \geq 0,~\forall~i,j\},
\end{equation}
which applies constraints to every entry of $\bm{X}$.

Interestingly, when applied to unweighted graphs, this relaxation is part of a hierarchy of relaxations for the maximum clique problem that have been well studied by the optimization community. Moreover, it turns out that it can be further relaxed to the so-called Lov\'{a}sz-theta problem, which forgoes the expensive inequality constraints of the doubly non-negative cone, yet remains tight in many cases~\cite{dukanovicSemidefiniteProgrammingRelaxations2007}.
Inspired by this idea, we consider the following relaxation of~\eqref{opt:MSRC}:
\begin{equation}\label{opt:lovasz-theta}
\begin{array}{rl}
    \max\limits_{\bm{X} \in \PSD^n} & \inner{\Affinity}{\bm{X}} \\
    \st & \bm{X}_{ij} = 0~\mbox{if}~\Affinity_{ij}=0,~\forall~i,j,\\
    & \trace{\bm{X}} = 1.
\end{array}\tag{LT}
\end{equation}
Especially when $\Affinity$ is dense (\ie, there are fewer outliers), this relaxation significantly reduces the number of constraints required.
Noting that all of the constraints are now linear equalities, it is straightforward to show that~\eqref{opt:lovasz-theta} can be expressed in the form of~\eqref{opt:SDP} (letting $\bm{C}=-\Affinity$).
Moreover, as we will show, the relaxation is empirically rank tight (\ie, yields rank-one solutions) when score-function parameters are set appropriately (see Section~\ref{sec:noise-param-exp}).
As such, we can apply our central-path certifier to verify global optimality of any candidate solution to~\eqref{opt:MSRC} found via fast, local optimization methods. In particular, we can use the CLIPPER algorithm proposed in~\cite{luskCLIPPERRobustData2024} to find candidate solutions to \eqref{opt:MSRC} and certify optimality by applying \ac{CPCert} to~\eqref{opt:lovasz-theta}.

\subsection{Adapting to Discrete Candidate Solutions}\label{sec:disc-adapter}

We can apply \ac{CPCert} to~\eqref{opt:lovasz-theta} to certify global optimality of a \emph{continuous} solution, $\candx$, to~\eqref{opt:MSRC}. However, there are many techniques that provide \emph{discrete} solutions to data-association problems (\eg, \ac{RANSAC}~\cite{fischlerRandomSampleConsensus1981} or \ac{PMC}~\cite{rossiParallelMaximumClique2015,shiROBINGraphTheoreticApproach2021b}). We would also like to be able to efficiently verify global optimality of these candidates, but require a means to map a discrete set to a candidate solution for~\eqref{opt:MSRC}.

To this end, we have the following theorem, which relates a solution of~\eqref{opt:MSRC} to its support:
\begin{theorem}\label{thm:supp-to-soln}
    Let $\bm{x}^*$ be a globally optimal solution to~\eqref{opt:MSRC} and let $\clique=\supp{\bm{x}^*}$ be the associated clique of the data-association graph. Let $\Affinity_{\clique}$ be the principal submatrix of the affinity matrix, $\Affinity$, found by keeping the rows and columns corresponding to the indices in $\clique$. Then $\bm{x}^*$ is a vector of zeros except on the indices of $\clique$, where it is equal to the leading eigenvector of $\Affinity_{\clique}$.
\end{theorem}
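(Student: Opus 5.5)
The plan is to restrict \eqref{opt:MSRC} to the fixed support $\clique$ and then argue with Rayleigh quotients and Perron--Frobenius theory. Write $\bm{x}_{\clique}\in\mathbb{R}^{|\clique|}$ for the subvector of $\bm{x}^*$ indexed by $\clique$. By definition of the support, $\bm{x}^*$ vanishes off $\clique$. Hence $\bm{x}^{*T}\Affinity\bm{x}^* = \bm{x}_{\clique}^T\Affinity_{\clique}\bm{x}_{\clique}$ and $\|\bm{x}_{\clique}\|=1$.

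Next, consider any unit vector $\bm{z}\in\mathbb{R}^{|\clique|}_+$, embedded into $\mathbb{R}^n$ by zero padding. Its support is contained in $\clique$, and $\clique$ is a clique: every pair $i,j\in\clique$ has $\Affinity_{ij}>0$ (by the clique constraint, with unit diagonal for $i=j$). Therefore $z_iz_j=0$ holds automatically whenever $\Affinity_{ij}=0$, and the padded $\bm{z}$ is feasible for \eqref{opt:MSRC}. Global optimality of $\bm{x}^*$ then gives that $\bm{x}_{\clique}$ maximizes $\bm{z}^T\Affinity_{\clique}\bm{z}$ over unit vectors in the nonnegative orthant.

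The key step is to remove the nonnegativity restriction. $\Affinity_{\clique}$ is symmetric with strictly positive entries, since all pairs in the clique are adjacent with positive score weights. For any unit $\bm{z}\in\mathbb{R}^{|\clique|}$ we have $\bm{z}^T\Affinity_{\clique}\bm{z}\le |\bm{z}|^T\Affinity_{\clique}|\bm{z}|$, and $|\bm{z}|$ is a feasible nonnegative unit vector. So the maximum over the orthant equals the unconstrained Rayleigh maximum $\lambda_{\max}(\Affinity_{\clique})$, and $\bm{x}_{\clique}$ attains it. A unit vector attaining the Rayleigh maximum of a symmetric matrix is an eigenvector for $\lambda_{\max}$, which follows from the spectral decomposition or the stationarity condition $\Affinity_{\clique}\bm{x}_{\clique}=\lambda\bm{x}_{\clique}$.

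The main obstacle is showing that ``the leading eigenvector'' is well defined and equals $\bm{x}_{\clique}$ itself rather than merely lying in a multidimensional eigenspace. I would handle this with Perron--Frobenius for positive matrices: $\lambda_{\max}$ is simple, and its eigenvector can be chosen strictly positive and unit-norm. Thus $\bm{x}_{\clique}$ is $\pm$ the Perron vector, and nonnegativity fixes the sign. This is consistent with the definition of the support, since every entry of $\bm{x}_{\clique}$ must be nonzero.
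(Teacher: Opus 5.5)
Your proposal is correct and follows essentially the same route as the paper: restrict \eqref{opt:MSRC} to the support $\clique$, note that the clique constraints become vacuous, and invoke Perron--Frobenius for the positive matrix $\Affinity_{\clique}$. Your version is more careful than the paper's in two places it leaves implicit. You use the $|\bm{z}|$ argument to show the nonnegativity restriction does not lower the maximum below $\lambda_{\max}$, and you use simplicity of the Perron root to conclude that $\bm{x}^*_{\clique}$ is \emph{the} leading eigenvector rather than merely some maximizer.
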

\begin{proof}
    We can restrict the feasible set of~\eqref{opt:MSRC} to the support (set of non-zero elements) of the globally optimal solution, $\clique$,
    \begin{equation*}
    \begin{array}{rl}
        \max\limits_{\bm{x}_{\clique} \in \mathbb{R}^{\card{\clique}}_+} & \bm{x}_{\clique}^T\Affinity_{\clique}\bm{x}_{\clique} \\
        \st & \left\|\bm{x}_{\clique}\right\|^2_2 = 1,
    \end{array}
    \end{equation*}
    where $\bm{x}_{\clique}$ is the vector of elements of $\bm{x}$ indexed by $\clique$.
    The clique constraints have disappeared since they are automatically satisfied by the restriction ($\clique$ is a clique). We are left with a maximum-eigenvalue optimization problem over the positive reals. However, since $\Affinity_{\clique}$ is a positive matrix, its leading eigenvector has all positive values by the Perron--Frobenius theorem~\cite{hornMatrixAnalysis2012}. This eigenvector is therefore the solution to the restricted problem and the solution to~\eqref{opt:MSRC} can be found by inserting zeros into this vector at the non-clique indices.
\end{proof}
To obtain a candidate optimum for~\eqref{opt:MSRC}, we start by checking if the discrete set is a clique of the data-association graph. If not, we conclude that there must be an outlier that is not geometrically consistent with the other members of the set. Otherwise, we assume that the set is the support of the optimal solution and apply Theorem~\ref{thm:supp-to-soln} to find a candidate solution.
This can be done efficiently by restricting the affinity matrix and applying power iterations to find its leading eigenvector~\cite{golubMatrixComputations2013a}.
Algorithm~\ref{alg:disc-to-cont} summarizes this procedure.
\begin{algorithm}
    \caption{DiscreteToContinuous}\label{alg:disc-to-cont}
    \begin{algorithmic}[1]
        \Require Discrete candidate set $\bm{u}$, affinity matrix $\Affinity$
        \Function{DiscreteToContinuous}{$\bm{u}$, $\Affinity$}
            \State $\Affinity_{\bm{u}} \gets$ $\Affinity[\bm{u},\bm{u}]$ \Comment{$\Affinity$ as indexed by $\bm{u}$}
            \If{$\exists~i,j : \left(\Affinity_{\bm{u}}\right)_{ij} = 0$} \Comment{Check for non-edges}
                \State \Return failure \Comment{$\bm{u}$ contains an outlier}
            \EndIf
            \State $\bm{v} \gets \mbox{PowerIteration}(\Affinity_{\bm{u}})$ \Comment{max eigenvector via~\cite{golubMatrixComputations2013a}}
            \State $\bm{x} \gets \bm{0}$
            \State $\bm{x}[\bm{u}] \gets \bm{v}$ \Comment{embed into full support vector}
            \State \Return $\bm{x}$
        \EndFunction
    \end{algorithmic}
\end{algorithm}

\subsection{Certifiable Stereo-Pointcloud Registration}\label{sec:cert-reg}

In this section, we assume that the set of correspondences, $\clique$, has been found and certified via the method presented in the previous sections. We collect matched pairs into a new set, $\MatchSet=\{(\bm{p}_i, \bm{q}_j)~\forall~(i,j)\in\clique\}$. Since all outlier associations have been removed, we assume that the matched pointclouds are related by a rigid transformation and corrupted with zero-mean, Gaussian noise:
\begin{equation}\label{eqn:cloud-noise}
    \bm{p}_i = \bm{R}\bm{q}_j + \bm{t} + \bm{\eta}_i,\quad \bm{\eta}_i\sim\mathcal{N}(\bm{0},\bm{\Sigma}_i),\quad \forall (\bm{p}_i,\bm{q}_j) \in \MatchSet,
\end{equation}
where $(\bm{R},\bm{t})$ are the rotation and translation components of the rigid transformation, respectively, and $\bm{\Sigma}_i\in\PD^3$ is the covariance associated with the matched pair.

Crucially, we do not assume that these covariances are isotropic. It has been shown that, especially when pointclouds are derived from stereo-camera data, accurate relative-pose estimates depend on proper accounting of the increased measurement uncertainty in the camera's depth direction~\cite{matthiesErrorModelingStereo1987a}.

The maximum-likelihood estimate of the poses is given by the solution to the following matrix-weighted, least-squares optimization problem~\cite{barfootStateEstimationRobotics2024}:
\begin{equation}\label{opt:mat-weight-reg}
		\min\limits_{\bm{R}\in\SO{3},\bm{t}\in\mathbb{R}^3} \sum\limits_{(i,j)\in\mathcal{M}} \left\|\bm{p}_i - \left(\bm{R}\bm{q}_j + \bm{t}\right)\right\|^2_{\bm{\Sigma}^{-1}_{i}},
\end{equation}
where $\| \bm{e}\|^2_{\bm{W}} = \bm{e}^T\bm{W}\bm{e}$ denotes the squared Mahalanobis distance. This optimization problem can be re-expressed as a \ac{QCQP} and has an \ac{SDP} relaxation that is rank tight even at high noise levels. However, the relaxation requires the use of redundant constraints that make the \ac{SDP} solution degenerate (\ie, it has a non-unique dual solution). Details of the explicit form of the relaxation are provided in~\cite{holmesSemidefiniteRelaxationsMatrixWeighted2024a}.

Though~\cite{holmesSemidefiniteRelaxationsMatrixWeighted2024a} uses standard interior-point methods to solve the relaxation itself, we will use the relaxation to certify solutions found via fast, local-optimization methods. In particular, we implement a local solver using the \ac{GTSAM} library~\cite{gtsam} and certify global optimality using \ac{CPCert}.

\section{Experiments}\label{sec:experiments}

We now investigate the performance of \ac{CPCert} for certifying solutions to the data-association and matrix-weighted pointcloud-registration problems discussed above. In Section~\ref{sec:sim-exp}, we first apply \ac{CPCert} to simulated examples to verify its performance under a controlled set of parameters (\eg, noise, outlier ratios, and problem dimension). This will also afford us an opportunity to study the circumstances under which the CLIPPER algorithm can be expected to converge to the globally optimal solution of~\eqref{opt:MSRC}. In Section~\ref{sec:stereo-pipeline}, we then validate the performance of \ac{CPCert} on a stereo-vision pipeline applied to a subset of the EuRoC dataset~\cite{burriEuRoCMicroAerial2016}. The parameters used for \ac{CPCert} are provided in Table~\ref{tab:parameters} for each of the problems considered.

We assume that a pose estimate, $\bm{T}_{\subscr{est}}\in\SE{3}$, can be expressed as the corresponding ground-truth pose, $\bm{T}_{\subscr{gt}}\in\SE{3}$, perturbed by an error defined in the Lie algebra of $\SE{3}$,
\begin{equation}
    \bm{T}_{\subscr{est}} = \bm{T}_{\subscr{gt}} \Exp(\bm{\xi}_{\subscr{err}}),
\end{equation}
where $\Exp(\cdot)$ is the so-called capitalized exponential map of the $\SE{3}$ Lie group and $\bm{\xi}_{\subscr{err}}\in\mathbb{R}^6$ is a vector representation of the estimation error defined in the $\SE{3}$ Lie algebra (see~\cite{solaMicroLieTheory2021} for formal definitions of these concepts). Accordingly, we define the translation and rotation estimation errors as the following (scalar) quantities:
\begin{equation}
        e_t = \|\bm{\xi}_{\subscr{trans}}\|_2,\quad e_r = \|\bm{\xi}_{\subscr{rot}}\|_2,
\end{equation}
where $\bm{\xi}_{\subscr{trans}}$ and $\bm{\xi}_{\subscr{rot}}$ are the translational and rotational components of the error, respectively.

All experiments were run on an Intel(R) Xeon(R) CPU E5-2698 running at 2.20 GHz.\ifanonymous
\else
\footnote{Code for the experiments presented in this section is available at \url{https://github.com/utiasASRL/cp-cert-experiments}.}
\fi
\section{Application to Outlier-Robust Stereo-Camera Pose Registration}\label{sec:applications}

\subsection{Stanford Bunny Experiments}\label{sec:sim-exp}

We replicate the simulated experiment setup used in~\cite{luskCLIPPERRobustData2024} to evaluate the CLIPPER algorithm; the Stanford Bunny dataset~\cite{turkZipperedPolygonMeshes1994} was used to generate two pointclouds along with a set of putative correspondences of varying size and fraction of outliers. One of the pointclouds was corrupted with zero-mean, bounded, isotropic, Gaussian noise, $\bm{\eta} \sim\mathcal{N}(\bm{0}, \gamma^2\bm{I})$, $\|\bm{\eta}\|_2 \leq \beta$, where $\gamma=0.01$ m and $\beta = 5.54\gamma$. The latter pointcloud was also transformed by a randomly generated rigid-body transformation.

In all cases, we compare \ac{CPCert} to directly solving the~\ac{SDP} relaxation via Mosek~\cite{apsMOSEKOptimizerAPI2024}, a state-of-the-art interior-point solver. We remind the reader that, for the degenerate \acp{SDP} under consideration, this direct solve is currently the most performant alternative for certification. To provide a fair comparison, both \ac{CPCert} and Mosek were run using sparse matrices with parallelization enabled (using 20 threads).

\subsubsection{Data Association Local Minimum Experiment}

We first demonstrate an initial experiment that informs our understanding of the circumstances under which CLIPPER can converge to local versus global minima. These experiments revealed that CLIPPER is quite robust to so-called adversarial scenarios. We demonstrate such an adversarial scenario in Figure~\ref{fig:adv-example}, in which all of the outlier associations form a second, geometrically consistent clique. Since CLIPPER is a local method, it can be induced to converge to the incorrect solution if initialized poorly. However, we observed that the default initialization generally selects the correct clique in these cases. Regardless, \ac{CPCert} accurately certifies the globally optimal solution.

\begin{figure*}[t]
    \centering
    \includegraphics[width=\textwidth]{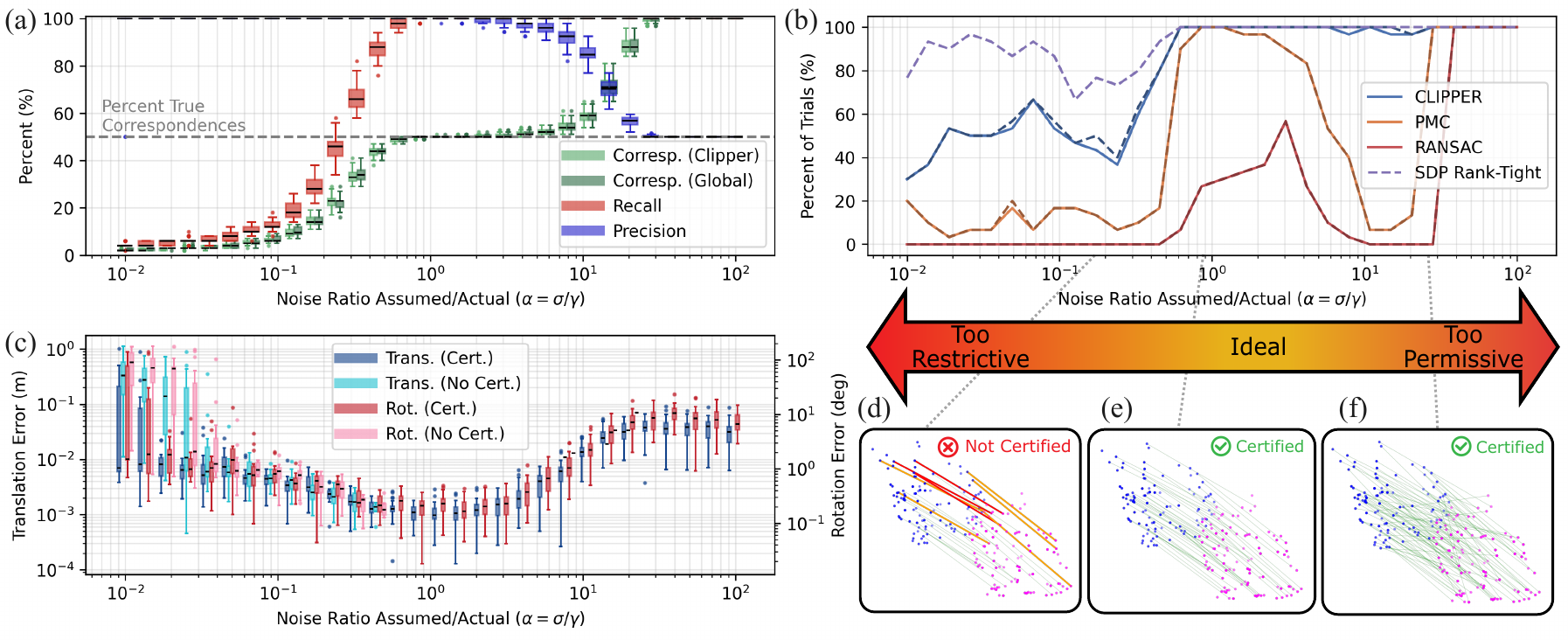}
    \caption{Effect of the data-association score-function parameter $\sigma=\alpha\gamma$ on CLIPPER's global optimality in terms of scaling parameter $\alpha$. (a) shows percent of correspondences selected by CLIPPER compared to the global solution, along with the precision and recall of CLIPPER relative to ground truth. (b) shows percent of (rank-tight) trials certified (solid) versus percent of trials converged to global optimum (dotted) for each method. The percentage of total trials that are rank tight is also provided (purple, dashed). (c) Relative pose-registration error using CLIPPER's inlier correspondences, split by certified and non-certified trials. (d--f) Example trials illustrating an overconfident, restrictive setting that misses inliers and converges to a local minimum ($\alpha<1$), an ideal setting, and an underconfident, permissive setting that selects all associations ($\alpha\geq30$); false-positive and false-negative inliers are shown in red and orange, respectively. The ideal range for $\alpha$ is between $0.5$ and $2$ to avoid local minima and provide the best set of inliers for registration.}
    \label{fig:invariant-sweep}
\end{figure*}

\subsubsection{Data Association Score Parameter Experiments}\label{sec:noise-param-exp}

It is tacitly implied in~\cite{luskCLIPPERRobustData2024} that the consistency-graph score-function parameters should be set based on the actual noise present in the pointclouds. However, the actual noise distribution is often unknown in practice. In this section, we turn our attention to the effect of the score-function parameters on the global optimality and accuracy of CLIPPER solutions.

We fix the ratio of $\sigma$ and $\epsilon$ and study the effect of scaling these parameters while keeping the pointcloud noise constant. More specifically, we keep a constant ratio between the parameters, $\epsilon = 5.54 \sigma$, and scale both by a positive value, $\alpha$, such that $\sigma = 0.01\alpha$. Note that the nominal values for this analysis were chosen to match the distribution of noise added to the pointclouds.

We vary $\alpha$ between $10^{-2}$ and $10^{2}$ with logarithmic spacing and perform 30 data-association trials with 100 putative correspondences, half of which are outliers. In each trial, we run and certify CLIPPER with \ac{CPCert} and solve~\eqref{opt:lovasz-theta} using Mosek to verify the results. We then use the CLIPPER associations to perform pointcloud registration using Arun's method~\cite{arunLeastSquaresFittingTwo1987}.

We show aggregate confusion table results across trials for certification in Table~\ref{tab:cert-conf-table}. We restrict the results in the table to trials in which we can recover a global solution from the interior-point solver. This is possible when the \ac{SDP} has a rank-one solution (842 of the 900 trials), allowing us to verify whether a solution was a global or local optimum. We ran \ac{CPCert} on solutions from CLIPPER, Mosek (\ie, the global solution), \ac{PMC}, and \ac{RANSAC}. We adapt the standard \ac{RANSAC} algorithm for pose regression as described in Appendix~\ref{sec:app:ransac-adapt}. The latter two methods were certified using Algorithm~\ref{alg:disc-to-cont}.
\begin{table}[h]
    \centering
    \caption{Confusion Table Results for \ac{CPCert} Certificates of Optimality Across Trials, in Percentage of Rank-Tight Trials. Rank-Tight Trials: 842; Total Trials: 900. (TP: Certified and Global, FP: Certified and Local, TN: Uncertified and Local, FN: Uncertified and Global)}
    \label{tab:cert-conf-table}
    \begin{tabular}{lrrrr}
        \toprule
        Method & TP & FP & TN & FN \\
        \midrule
        CLIPPER & 83.02 & 0.00 & 16.75 & 0.24  \\
        Mosek (Global) & 99.29 & 0.00 & 0.00 & 0.71  \\
        PMC & 50.83 & 0.00 & 49.17 & 0.00  \\
        RANSAC & 22.45 & 0.00 & 77.55 & 0.00  \\
        \bottomrule
    \end{tabular}
\end{table}
We note that~\ac{CPCert} performs as expected across trials; there were no cases where a local solution was certified as global and a negligible percentage of cases where the solution was global but not certified. Since they are not designed to explicitly solve~\eqref{opt:MSRC}, \ac{PMC} and \ac{RANSAC} do not find the global solution as often as CLIPPER.

A more extensive exposition of the results is shown in Figure~\ref{fig:invariant-sweep}. Figure~\ref{fig:invariant-sweep}-(a) shows the relationship between the parameter $\alpha$ and the percent of total correspondences selected by CLIPPER versus the global solution. Recall and precision of the CLIPPER solution with respect to the ground truth are also provided.

Choosing $\alpha>1$ corresponds to an underconfident stance, which makes the data-association graph more permissive and allows more inconsistent correspondences to be permissible in the optimal clique, decreasing precision. Eventually, when $\alpha\geq30$, the globally optimal solution consists of all possible associations (see trial in Figure~\ref{fig:invariant-sweep}-(f)).
Conversely, choosing $\alpha<1$ corresponds to an overconfident stance, which makes the data-association graph more restrictive and causes the optimal clique to miss potential inliers, decreasing recall (see trial in Figure~\ref{fig:invariant-sweep}-(d)).

We also show the relative error of the pose registration for the CLIPPER inliers in Figure~\ref{fig:invariant-sweep}-(c) and separate the results based on whether a certificate was found (`Cert.' versus `No Cert.').
Figures~\ref{fig:invariant-sweep}-(a) and~\ref{fig:invariant-sweep}-(c) suggest that the ideal value of $\sigma$ is between half and two times the actual noise level, $\gamma=0.01$ m. Outside of this region, association and registration error increase considerably.

Finally, Figure~\ref{fig:invariant-sweep}-(b) shows the effect of $\alpha$ on the tightness of the relaxation and the convergence properties of the local solvers. We first note that the percentage of total trials for which the relaxation is rank tight (dashed, purple line) is 100\% whenever $\alpha\geq0.6$ and is above 70\% otherwise.\footnote{Numerically, the \ac{SDP} solution is considered to be rank one if the ratio between its largest and second-largest eigenvalue is greater than $10^6$.} This suggests that the relaxation is tight whenever the score-function parameters are chosen appropriately.

The remaining lines in the plot show the percentage of rank-tight trials for which a given method converges to a global optimum\footnote{We verify global optimality of a trial whenever the relative difference between the local solver cost and the \ac{SDP} cost found by Mosek is less than $10^{-4}$.} (dotted) compared to when each is certified by~\ac{CPCert} (solid). Importantly, we see that there is good agreement between these curves, signifying that~\ac{CPCert} accurately certifies optimality when possible.

Generally, we see that all methods converge to local minima more often when $\alpha$ drops below $0.8$. Above this value, CLIPPER almost always converges to global minima, while \ac{PMC} and \ac{RANSAC} still converge often to local minima.

Examples of different cases are shown in Figures~\ref{fig:invariant-sweep}-(d) to~\ref{fig:invariant-sweep}-(f), in which the CLIPPER solution is compared to the globally optimal relaxation solution. False-positive (incorrect) inliers are shown in red and false-negative (missed) inliers are shown in orange. The presence of local minima is also suggested by the difference in registration error for certified and non-certified solutions in Figure~\ref{fig:invariant-sweep}-(c).

The results of this section suggest that care must be taken when selecting the parameter $\sigma$ for data-association graph generation. In practice, this presents a challenge since the underlying noise distribution of the pointcloud is not usually known \emph{a priori}. For cases where the actual noise is larger than expected, local minima are more common, and certification becomes more important.

\subsubsection{Data Association Runtime Experiment}

In this section, we demonstrate the runtime performance of certifying a local solution with~\ac{CPCert} compared to solving~\eqref{opt:lovasz-theta} directly with Mosek. We study the runtime of \ac{CPCert} and Mosek as the number of associations and ratio of outliers vary, including the runtime of CLIPPER (\ie, a local solver) as a reference point. In terms of the \ac{SDP} size, the number of associations controls the dimension, $n$, and the ratio of outliers controls the number of constraints, $m$, at each dimension. Experiments were performed with $\alpha=1.0$, and 30 trials were performed for each parameter value. For these experiments, all CLIPPER and Mosek solutions were certified as globally optimal by \ac{CPCert}.

The results of this experiment are shown in Figure~\ref{fig:timing-sweep}. It is immediately clear from the plots that \ac{CPCert} certifies solutions two to three orders of magnitude faster than solving the entire \ac{SDP} directly using Mosek.

Figure~\ref{fig:timing-sweep}-(a) shows the relationship between runtime and \ac{SDP} variable dimension (\ie, number of associations). As expected, the relationship for both Mosek and \ac{CPCert} is cubic with respect to the number of associations, since both methods must perform cubic operations on dense $n$-by-$n$ matrices at each iteration (\eg, multiplication, LDL/Cholesky factorization, eigenvalue computations).

Interestingly, Figure~\ref{fig:timing-sweep}-(b) shows that, with 200 associations, \ac{CPCert} runtime changes little with the number of outliers. More importantly, the runtime stays at approximately 100 ms (near real time) even as the number of constraints approaches 20,000. We posit that this constant relationship is due to efficient parallelization and exploitation of sparsity in our indirect linear solver (see Section~\ref{sec:eff-lin-solve}).

\begin{figure}[t]
    \centering
    \includegraphics[width=\columnwidth]{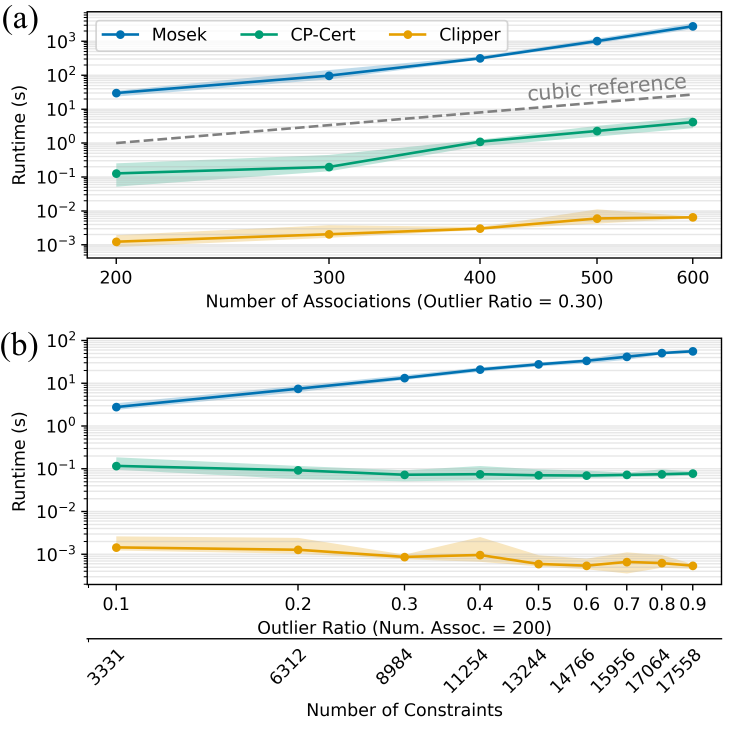}
    \caption{Comparison of Mosek, CLIPPER, and \ac{CPCert} runtimes as different parameters are varied. \ac{CPCert} generally runs two to three orders of magnitude faster than running the full interior-point solve with Mosek. Plotted points correspond to mean runtime and shaded areas encompass all runtimes across trials. (a) compares \ac{CPCert} and Mosek as the number of associations changes with fixed outlier ratio. Both scale cubically with the number of associations (problem dimension). (b) compares \ac{CPCert} and Mosek as the outlier ratio (and corresponding number of constraints) changes for a fixed number of associations (problem dimension). Notably, \ac{CPCert} stays at a near-constant runtime across different numbers of constraints, owing to our efficient linear solver.}
    \label{fig:timing-sweep}
\end{figure}

\subsubsection{Stereo Pose Registration}

In this section, we apply \ac{CPCert} to certify local solutions of the pointcloud-registration problem,~\eqref{opt:mat-weight-reg}, adapted to pointclouds derived from stereo-camera data.
To simulate a stereo-camera setup, we first extract a set of $N$ points from the Stanford Bunny dataset and apply a rigid transformation that maps the points into a `camera' frame.
Following the methodology in Appendix E of~\cite{holmesSemidefiniteRelaxationsMatrixWeighted2024a}, we then derive the (Euclidean) covariance of each point, $\bm{\Sigma}_i$, using a stereo-camera model with a $0.24$ m baseline, a $484.5$-pixel focal length, and an assumed image noise of $0.5$ pixels. This covariance is used to both add noise to the pointcloud (according to~\eqref{eqn:cloud-noise}) and derive matrix weights in the cost function of~\eqref{opt:mat-weight-reg}.

As mentioned above, we use \ac{GTSAM} in Python with custom factors as a local solver for the optimization. We then pass the local solution to \ac{CPCert} for certification. For timing comparison and verification, we also solve the relaxation of~\eqref{opt:mat-weight-reg} using Mosek.

\begin{figure}
    \centering
    \includegraphics[width=\columnwidth]{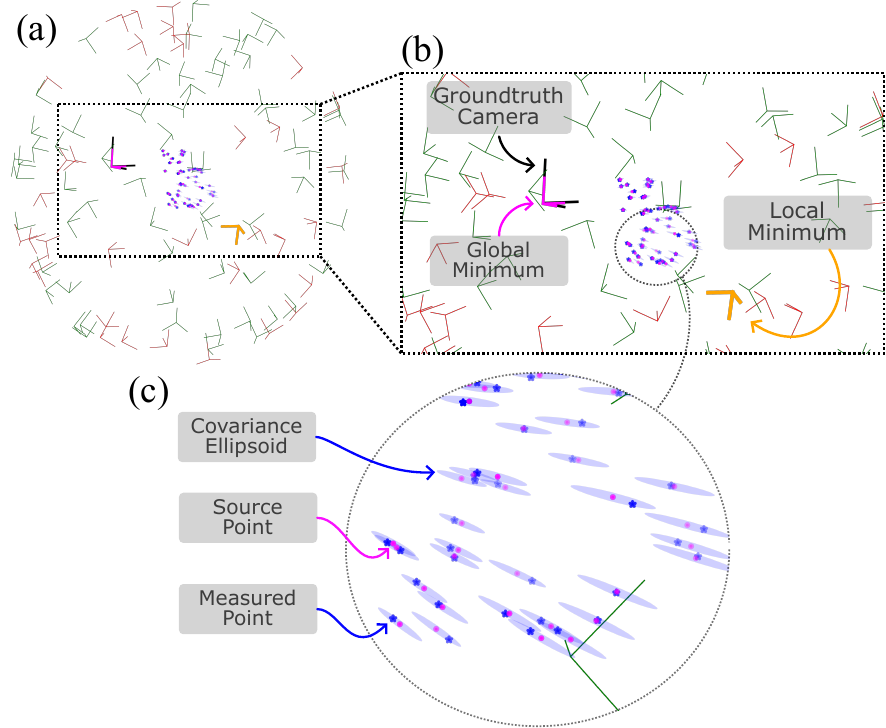}
    \caption{Experimental setup for simulated pose-registration experiments showing initializations and local and global minima. (a) Initial poses -- set in a sphere around the pointcloud -- are colored red if they converge to a local minimum and green if they converge to the global minimum. (b) Ground-truth camera frame shown in black, with the global (magenta) and local (orange) solutions overlaid. (c) Pointcloud details; source points (magenta dots) are corrupted by stereo-camera noise to give the target points (blue stars). Three-standard-deviation uncertainty ellipsoids for measurement noise (blue) elongate in the camera's depth direction.}
    \label{fig:pose-reg-setup}
\end{figure}

Our experimental setup is shown in Figure~\ref{fig:pose-reg-setup}. The camera frame is placed at a distance, $d$, along the camera axis ($z$-axis) away from the centroid of the pointcloud and is rotated by three degrees to provide some parallax effect. In each trial, we randomly initialize the optimization with 100 different poses, such that they are located on a sphere of radius $d$ and their $z$-axis points towards the centroid of the pointcloud.

Table~\ref{tab:pose-reg-conf} shows the confusion table for \ac{CPCert}. We use the \ac{SDP} global minimum cost (found via Mosek) to determine whether a solution returned by \ac{GTSAM} is a global minimum or not. The \ac{SDP} was rank tight in all cases. We note that the certifier successfully identified global minima whenever they appeared.

\begin{table}[h]
    \centering
    \caption{Confusion Table (in percentages) for Pose Registration Certification (TP: Certified and Global, FP: Certified and Local, TN: Uncertified and Local, FN: Uncertified and Global)}
    \label{tab:pose-reg-conf}
    \begin{tabular}{lrrrr}
        \toprule
        Distance (m) & TP & FP & TN & FN\\
        \midrule
        3.0 & 72.0 & 0.0 & 28.0 & 0.0 \\
        5.0 & 87.0 & 0.0 & 13.0 & 0.0 \\
        10.0 & 93.0 & 0.0 & 7.0 & 0.0 \\
        \bottomrule
    \end{tabular}
\end{table}

Table~\ref{tab:pose-reg-runtime} shows runtime results for Mosek and \ac{CPCert}. We separate the runtime results for \ac{CPCert} based on whether a certificate was successfully found or not.

We note that the small size of the pose-registration problem ($n=13,~m=21$) is exactly the regime where we would expect an interior-point solver to perform well. On the other hand, the design considerations of \ac{CPCert} targeted scalability for larger problems. Nevertheless, we see that \ac{CPCert}'s runtimes are approximately five times faster than Mosek's in the success case and have comparable runtimes even when \ac{CPCert} fails to find a certificate (due to local minima).

\begin{table}[h]
    \centering
    \caption{Average Runtime Results for Pose Registration Certification}
    \label{tab:pose-reg-runtime}
    \begin{tabular}{p{15mm}p{15mm}p{15mm}p{15mm}}
        \toprule
        Distance \newline (m)& Mosek\newline (ms) & \ac{CPCert} \newline Succ. (ms) & \ac{CPCert} \newline Fail (ms) \\
        \midrule
        3.0 & 5.2 & 0.9 & 5.6 \\
        5.0 & 4.2 & 1.1 & 5.5 \\
        10.0 & 7.3 & 1.1 & 5.8 \\
    \bottomrule
    \end{tabular}
\end{table}

\subsection{Certifiable Stereo Pipeline}\label{sec:stereo-pipeline}

\begin{figure*}[t]
    \centering
    \includegraphics[width=\textwidth]{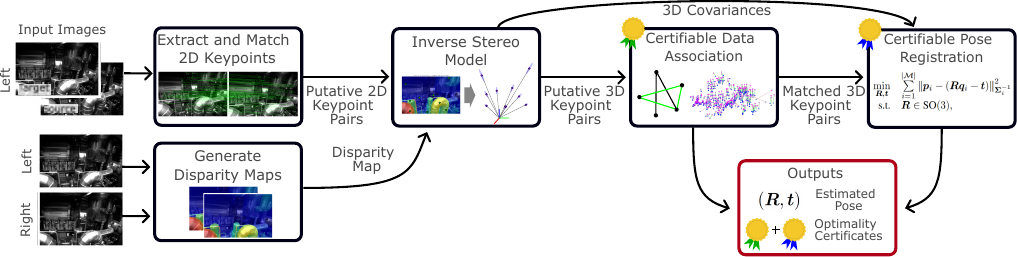}
    \caption{Certifiable stereo pose-registration pipeline. Given left images and disparity maps (from RAFT-Stereo) for two temporally distinct stereo frames, SuperPoint and LightGlue extract and match a putative set of 2D keypoint pairs, which are lifted to 3D via an inverse stereo model. These putative 3D correspondences are used to build a consistency graph, from which CLIPPER and \ac{CPCert} identify a certifiably optimal, outlier-free set of inliers. Finally, \ac{GTSAM} estimates the relative pose between the two frames from this certified inlier set, and \ac{CPCert} certifies the global optimality of the resulting pose estimate. The final result is a certifiably optimal estimated pose based on certified correspondences.}
    \label{fig:pipeline}
\end{figure*}

To validate the performance of \ac{CPCert} on real-world data, we include our certifiable data association and certifiable pose registration in a stereo-camera pose-registration pipeline. A detailed outline of the specific pipeline that we build is shown in Figure~\ref{fig:pipeline}.

As input, the pipeline takes the disparity maps and left images for two temporally distinct frames.
We use RAFT-Stereo to generate the disparity maps for rectified stereo images in a preprocessing step~\cite{lipsonRAFTStereoMultilevelRecurrent2021}, using the same rectification and stereo-camera parameters as~\cite{camposORBSLAM3AccurateOpenSource2021}.
SuperPoint~\cite{detoneSuperPointSelfSupervisedInterest2018} is used to extract 2D features and descriptors from two (left) images from different temporal frames. These features are then matched using LightGlue~\cite{lindenbergerLightGlueLocalFeature2023a} to obtain a putative set of 2D keypoint pairs between the images (limited to 500 pairs). The 2D keypoints are then converted to 3D keypoints using an inverse stereo model and the input disparity maps. These putative 3D keypoint pairs are used to generate a consistency graph (score-function parameters: $\sigma=0.15$ and $\epsilon=0.831$), with which we find a \emph{certifiably optimal set of inliers} using CLIPPER and \ac{CPCert} as discussed in Section~\ref{sec:cert-data-assoc}. This outlier-free, certified set of 3D keypoint pairs is then used to estimate the relative pose between the camera frames of the input images. We solve the pose estimation using \ac{GTSAM} initialized with an identity transformation and certify global optimality using \ac{CPCert}, as described in Section~\ref{sec:cert-reg}.

We apply this pipeline to the Machine Hall 1 (easy) sequence of the EuRoC dataset~\cite{burriEuRoCMicroAerial2016} and present the results in Tables~\ref{tab:pipeline-results} and~\ref{tab:pipeline-runtime} below. We run the pipeline sequentially on each (source) frame in the dataset sequence, selecting the second (target) frame to yield varying time intervals between frames. This allows us to study performance across different magnitudes of relative-pose change between images.

Table~\ref{tab:pipeline-results} shows the certification success rates and absolute registration error for different frame intervals.
From the table, we see that the learned feature extractor is quite good at identifying candidate matches, producing between 16\% and 22\% outliers on average.
Certification almost always succeeds for the data-association problem, and the pose-registration problem is always certified. Pose-registration errors are small, confirming the absence of outliers in the pointcloud data after data association.\footnote{Our focus in this work is the certification of the methods in this pipeline rather than exact tuning of the stereo pipeline for error minimization.}

We investigated certain cases where data-association certification failed by re-solving the \ac{SDP} directly (via Mosek). In these cases, we confirmed that CLIPPER converged to local optima, but these optima were fairly benign and did not lead to catastrophic failure.
Figure~\ref{fig:stereo-pipeline-local-min} demonstrates one such local minimum, as well as the 2D and 3D correspondences and a sample disparity map.

\begin{figure}[t]
    \centering
    \includegraphics[width=\columnwidth]{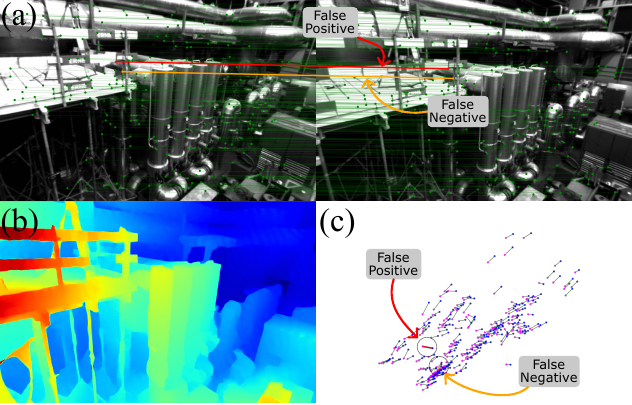}
    \caption{A demonstration of a local minimum returned by CLIPPER that \ac{CPCert} (correctly) does not certify (confirmed by finding the rank-one \ac{SDP} solution). In (a) and (c), green lines indicate correspondences that are correctly accepted by CLIPPER, red lines indicate false positives, and orange lines indicate false negatives with respect to the global solution. (a) shows features and putative correspondences generated by SuperPoint and LightGlue. (b) shows the disparity map generated by RAFT-Stereo and used together with the inverse stereo model to generate the magenta 3D pointcloud in (c). (c) shows data association in the 3D pointcloud, which is used to compute the relative pose.}
    \label{fig:stereo-pipeline-local-min}
\end{figure}

\begin{table}[h]
    \centering
    \caption{Stereo Pipeline Certification Results and Registration Error}
    \label{tab:pipeline-results}
    \begin{tabularx}{\columnwidth}{XXXXXXXX}
    \toprule
    Frame Itvl. (s) & Avg. \# Inliers & Avg. \# Assocs. & Avg. Outlier Rate (\%) & Data Assoc. Cert.  (\%) & Pose Reg. Cert. (\%) & Avg. Trans. Err. (m) & Avg. Rot. Err. (deg) \\
    \midrule
    0.05 & 346.7 & 414.2 & 16.4 & 93.6 & 100.0 & 0.0025 & 0.040 \\
    0.25 & 297.1 & 362.4 & 18.3 & 93.2 & 100.0 & 0.0070 & 0.106 \\
    0.50 & 262.3 & 323.4 & 19.5 & 94.0 & 100.0 & 0.0116 & 0.167 \\
    1.00 & 221.8 & 277.6 & 21.2 & 94.1 & 100.0 & 0.0210 & 0.293 \\
    \bottomrule
    \end{tabularx}
\end{table}

Table~\ref{tab:pipeline-runtime} shows certifier runtimes for both data association and pose registration for different frame intervals. Though pose-registration runtime is consistently around $2$ ms, certificates for data association take up to one second (in successful cases). Runtime improves as the interval increases because the number of putative associations (equal to the \ac{SDP} dimension) drops.

We separate the runtime results based on whether the data-association certificate succeeded or failed. As with the simulated pose-registration problem, we see that the timing results for data association are worse when the certificate fails, due to differences in the early-stopping criterion between success and failure.

\begin{table}[h]
    \centering
    \caption{Stereo Pipeline Certification Runtime Results}
    \label{tab:pipeline-runtime}
    \begin{tabularx}{\columnwidth}{XXXXX}
        \toprule
        Frame Itvl. (s) & Data Assoc. Cert. Succ. (ms) & Data Assoc. Cert. Fail (ms) & Reg. Cert. Succ. (ms) & Reg. Cert. Fail (ms) \\
        \midrule
        0.05 & 981.3 & 4629.3 & 1.2 & -- \\
        0.25 & 615.2 & 4102.9 & 1.3 & -- \\
        0.50 & 466.8 & 2899.9 & 1.4 & -- \\
        1.00 & 351.0 & 2047.3 & 1.5 & -- \\
        \bottomrule
    \end{tabularx}
\end{table}

\section{Conclusion}\label{sec:conclusion}

In this paper, we introduced the Central Path Certifier (\ac{CPCert}), a method for efficiently certifying the global optimality of candidate solutions to the degenerate \ac{SDP} relaxations that are common in robotics and computer vision. Rather than solving the relaxation from scratch or attempting to reconstruct dual variables from the underdetermined \ac{KKT} equations directly, \ac{CPCert} exploits the fact that the dual variables associated with any point on the primal central path are uniquely defined. By perturbing a candidate solution into the interior of the \ac{PSD} cone and applying a specialized, primal-only Newton method to trace the central path back to the candidate, \ac{CPCert} can recover certificates of optimality when the candidate is globally optimal. We showed how to construct a fixed preconditioner that allows the use of a fast conjugate-gradient method that exploits knowledge of the candidate solution, problem sparsity, and parallelism at every step.

We demonstrated two applications of \ac{CPCert} within a certifiable, stereo-based pose-registration pipeline. First, drawing inspiration from the Lov\'{a}sz-theta relaxation used in the optimization literature, we proposed a new \ac{SDP} relaxation of pointcloud data association, and showed empirically that it is both tight on typical robotics problems and well suited to certification via \ac{CPCert}. Second, we applied \ac{CPCert} to certify matrix-weighted pointcloud registration, which properly accounts for the anisotropic depth uncertainty characteristic of stereo measurements. Across both simulated and real-world experiments on the EuRoC dataset, \ac{CPCert} reliably distinguished globally optimal solutions from local minima -- with no observed case of a locally optimal solution being falsely certified -- while certifying solutions two to three orders of magnitude faster than state-of-the-art interior-point solvers. In our full stereo pipeline, this translated to certification of pose registration well within real-time budgets, and certification of data association in near real time in some cases.

A key limitation to deployment on a broader range of problems is that \ac{CPCert} currently relies on a set of parameters that must be tuned to the problem class at hand. Identifying relative, self-scaling parameterizations that generalize across problem types is therefore a natural direction for future work.

Despite the promising runtimes of~\ac{CPCert}, further improvements are required to truly enable real-time certification for data association. A key bottleneck is manipulation and multiplication with the primal variable, $\bm{X}$, which is densely represented, despite being low-rank.
At present, \ac{CPCert} has been designed to handle a single monolithic \ac{PSD} variable, but a promising future direction would involve allowing multiple \ac{PSD} variables that can be handled in parallel.
This would enable the exploitation of so-called chordal decompositions of the primal variables, which have already proven to bring significant improvements to scalability and parallelization of \ac{SDP} solvers~\cite{dumbgenExploitingChordalSparsity2025,zhengChordalFactorwidthDecompositions2021a}. We also plan to explore ways to improve efficiency by leveraging cheaper representations of $\bm{X}$, via matrix sketching~\cite{yurtseverScalableSemidefiniteProgramming2021} or low-rank approximations~\cite{bellaviaRelaxedInteriorPoint2021b}.

\appendices

\section{Proof of Proposition~\ref{prop:rank-tight-suff}}\label{sec:app:proof-rank-tight}

\begin{proof}
    By the assumption of rank tightness, every solution of~\eqref{opt:SDP} has rank at most one, and, since the constraint for which $b_i\not=0$ rules out $\bm{X}=\bm{0}$, exactly one. In particular, the relaxation is tight, since any rank-one solution, $\bm{X}=\bm{x}\bm{x}^T$, provides a point that is feasible for~\eqref{opt:QCQP} with cost $\optValSDP$, so that $\optValQCQP\leq\optValSDP$, which, combined with~\eqref{eqn:weak-duality}, gives equality. Consequently, if $\candx$ is a global solution of~\eqref{opt:QCQP}, then $\candX=\candx\candx^T$ is feasible for~\eqref{opt:SDP} and attains the optimal value, so it lies in the solution set.

    We now show that this solution set must be a singleton. Note that the set is convex, being the intersection of the (convex) feasible set of~\eqref{opt:SDP} with a level set of the linear objective. Suppose that it contains another optimum, $\bm{X}'\not=\candX$, which, by rank tightness, can be written as $\bm{X}'=\bm{y}\bm{y}^T$ for some $\bm{y}\in\mathbb{R}^n$. If $\bm{y}$ is parallel to $\candx$, or equivalently $\bm{X}'=s\candX$ for some $s\geq0$, then feasibility of both matrices with respect to the constraint for which $b_i\not=0$ gives
    \begin{equation*}
        b_i = \inner{\bm{A}_i}{\bm{X}'} = s\inner{\bm{A}_i}{\candX} = s b_i,
    \end{equation*}
    so that $s=1$ and $\bm{X}'=\candX$, contradicting our supposition. Otherwise, $\bm{y}$ and $\candx$ are linearly independent, and any strict convex combination, $(1-t)\candX+t\bm{X}'$ with $t\in(0,1)$, is also a solution, but has rank two, contradicting rank tightness. Therefore, the set of optimal solutions is the singleton, $\{\candX\}$, corresponding exactly to $\candX=\candx\candx^T$.

    Finally, by Lemma 1.1 of~\cite{halickaConvergenceCentralPath2002}, the limit point of the central path as $\mu\rightarrow0$ lies in the solution set of~\eqref{opt:SDP} and, therefore, the central path converges exactly to $\candX$.
\end{proof}

\section{Cost-Constraint Formulation}\label{sec:app:equiv-ipm}
In this section, we prove that the optimizer of~\eqref{opt:CPCert} satisfies all of the properties that we require for our algorithm. We first provide a lemma that characterizes the objective function along the central path.
\begin{lemma}[Objective along the central path]
    \label{lem:bijective-obj}
    The function $\rho(\mu)$ (as defined in Section~\ref{sec:equiv-ipm}) is a strictly increasing bijection,
    \begin{equation}
        \rho(\mu):~(0,\infty)\rightarrow(\optValSDP, \optValAC),
    \end{equation}
    where $\optValAC = \inner{\bm{C}}{\bm{X}_{\subscr{AC}}}$ is the objective value at the analytic center.
\end{lemma}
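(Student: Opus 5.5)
Throughout, I assume the standard interior-point setting in which the central path exists: the feasible set of~\eqref{opt:SDP} contains a positive-definite point, the dual is strictly feasible, and the $\bm{A}_i$ are linearly independent. Under these conditions, $\bm{X}(\mu)$ is well defined and smooth (analytic) in $\mu$, since the KKT system~\eqref{eqn:kkt-ipm} has a nonsingular Jacobian. I also assume the objective $\inner{\bm{C}}{\bm{X}}$ is not constant on the feasible set; otherwise $\optValSDP=\optValAC$ and the interval is empty. The plan is to prove three things in turn: strict monotonicity, the two limits, and the bijection (which then follows from continuity).

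\textbf{Monotonicity.} Let $\phi(\bm{X})=-\log\det\bm{X}$, and take $0<\mu_1<\mu_2$ with central-path points $\bm{X}_1,\bm{X}_2$. Each $\bm{X}_k$ minimizes $\inner{\bm{C}}{\bm{X}}+\mu_k\phi(\bm{X})$ over the same feasible set. Comparing optimality conditions gives
\begin{equation*}
\inner{\bm{C}}{\bm{X}_1}+\mu_1\phi(\bm{X}_1)\le \inner{\bm{C}}{\bm{X}_2}+\mu_1\phi(\bm{X}_2),\qquad
\inner{\bm{C}}{\bm{X}_2}+\mu_2\phi(\bm{X}_2)\le \inner{\bm{C}}{\bm{X}_1}+\mu_2\phi(\bm{X}_1).
\end{equation*}
Adding these yields $(\mu_2-\mu_1)(\phi(\bm{X}_2)-\phi(\bm{X}_1))\le0$, so $\phi(\bm{X}_2)\le\phi(\bm{X}_1)$. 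Substituting back into the first inequality gives $\rho(\mu_1)\le\rho(\mu_2)$.

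For strictness, I use that $\phi$ is strictly convex, so each minimizer is unique. If $\bm{X}_1=\bm{X}_2$, then the stationarity conditions for $\mu_1$ and $\mu_2$ together imply that $\bm{C}$ lies in the range of $\LinConsAdj{\cdot}$. In that case the objective is constant on the affine feasible set, which is excluded. Hence $\bm{X}_1\neq\bm{X}_2$, and uniqueness makes both inequalities strict, so $\rho(\mu_1)<\rho(\mu_2)$. Alternatively, differentiating the KKT system shows
\begin{equation*}
\rho'(\mu)=\tfrac{1}{\mu}\,\inner{\dot{\bm{X}}}{\bm{X}^{-1}\dot{\bm{X}}\bm{X}^{-1}}\cdot\mu>0
\end{equation*}
whenever $\dot{\bm{X}}\ne0$; I would keep this as a backup argument.

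\textbf{Limits.} As $\mu\to0$, Lemma~1.1 of~\cite{halickaConvergenceCentralPath2002} (already used in Appendix~\ref{sec:app:proof-rank-tight}) gives convergence of $\bm{X}(\mu)$ to an optimal solution, so $\rho(\mu)\to\optValSDP$ by continuity of the linear objective. A simpler route is the duality-gap identity: $\inner{\bm{H}(\mu)}{\bm{X}(\mu)}=n\mu$, and the gap equals $\rho(\mu)-(\text{dual objective})$, so $\rho(\mu)\le\optValSDP+n\mu$.

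As $\mu\to\infty$, I divide the barrier objective by $\mu$, giving $\phi(\bm{X})+\tfrac1\mu\inner{\bm{C}}{\bm{X}}$. I then argue that its minimizer converges to the analytic center $\bm{X}_{\subscr{AC}}$ of~\eqref{eqn:analytic-center}. This requires that the feasible set be bounded, or at least that the analytic center exist, and that the minimizers stay in a compact set. The main obstacle is exactly this: showing the iterates remain bounded and away from the boundary uniformly in $\mu$. I would first try to get it from strong convexity of $\phi$ near $\bm{X}_{\subscr{AC}}$ together with a level-set argument. Alternatively, I would use the implicit function theorem at $\nu=1/\mu=0$ applied to the system $\nu\bm{C}+\LinConsAdj{\bm{\lambda}}=\bm{X}^{-1}$, $\LinCons{\bm{X}}=\bm{b}$, whose Jacobian at $\nu=0$ is the nonsingular analytic-center system. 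This gives analytic dependence on $\nu$ near zero, so $\rho\to\optValAC$.

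\textbf{Bijection.} $\rho$ is continuous, strictly increasing, and has limits $\optValSDP$ and $\optValAC$ at the two ends. By the intermediate value theorem it maps $(0,\infty)$ onto the open interval $(\optValSDP,\optValAC)$, and strict monotonicity gives injectivity.
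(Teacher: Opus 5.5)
Your monotonicity step is the same exchange argument the paper uses: add the two barrier-optimality inequalities, deduce the ordering of $\log\det$, and substitute back. Your route differs, and is more complete, in two places.

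First, distinctness. The paper writes the inequalities as strict from the outset by simply asserting that $\bm{X}(\mu_1)$ and $\bm{X}(\mu_2)$ are distinct. Your observation that $\bm{X}(\mu_1)=\bm{X}(\mu_2)$ would force $\bm{C}$ into the range of $\LinConsAdj{\cdot}$, making the objective constant on the feasible set, is what actually justifies that step.

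Second, and more substantially, surjectivity. The paper obtains the bijection by restricting the codomain to the objective values attained for $\mu\in(0,\infty)$. It then identifies that set with $(\optValSDP,\optValAC)$ without argument, so continuity of $\rho$ and its endpoint limits are never established. You prove the identification:
\begin{itemize}
\item continuity, from the implicit function theorem applied to the nonsingular KKT system;
\item the lower limit, from $\optValSDP\le\rho(\mu)\le\optValSDP+n\mu$ via the duality gap $\inner{\bm{H}(\mu)}{\bm{X}(\mu)}=n\mu$ and weak duality, with no appeal to convergence of the central path;
\item the upper limit, from the implicit-function branch through $\bm{X}_{\subscr{AC}}$ at $\nu=1/\mu=0$;
\item the conclusion, from the intermediate value theorem.
\end{itemize}
For the upper limit, you should state explicitly that this branch coincides with the central path by uniqueness of the barrier minimizer.

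Your version costs explicit standing assumptions: strict primal and dual feasibility, linearly independent $\bm{A}_i$, existence of the analytic center, and a non-constant objective. The paper relies on these implicitly anyway, and in return you get an actual proof that the range is exactly the open interval.

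One small slip, confined to your backup argument. Differentiating the central-path conditions, pairing with $\dot{\bm{X}}$, and using $\LinCons{\dot{\bm{X}}}=\bm{0}$ gives $\rho'(\mu)=\mu^2\inner{\dot{\bm{X}}}{\bm{X}^{-1}\dot{\bm{X}}\bm{X}^{-1}}$. In your expression the factors $\tfrac{1}{\mu}$ and $\mu$ simply cancel. The positivity conclusion is unaffected.
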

\begin{proof}
We first show that $\rho(\mu)$ is increasing for all $\mu\in(0,\infty)$.
Fix $0 < \mu_2 < \mu_1$ and let $\bm{X}_i = \bm{X}(\mu_i)$ correspond to distinct points on the central path. Since these points are the unique minimizers of~\eqref{opt:SDP-IPM} for each $\mu_i$~\cite{klerkAspectsSemidefiniteProgramming2004},\footnote{Uniqueness follows from the fact that the log-barrier function is strictly convex~\cite{boydConvexOptimization2004a}.\label{foot:log-conv}} it follows that
\begin{align}
\langle \bm{C},\bm{X}_1\rangle - \mu_1\log\det \bm{X}_1 &< \langle \bm{C},\bm{X}_2\rangle - \mu_1\log\det \bm{X}_2, \label{eq:i}\\
\langle \bm{C},\bm{X}_2\rangle - \mu_2\log\det \bm{X}_2 &< \langle \bm{C},\bm{X}_1\rangle - \mu_2\log\det \bm{X}_1. \label{eq:ii}
\end{align}
Adding \eqref{eq:i}--\eqref{eq:ii} and canceling the cost terms yields $(\mu_2-\mu_1)(\log\det\bm{X}_1 - \log\det\bm{X}_2) < 0$, and since $\mu_2 - \mu_1 < 0$,
\begin{equation}
\log\det\bm{X}_1 - \log\det\bm{X}_2 > 0. \label{eq:detmono}
\end{equation}
Rearranging \eqref{eq:ii} gives
\begin{equation*}
    \langle \bm{C},\bm{X}_1\rangle - \langle \bm{C},\bm{X}_2\rangle > \mu_2(\log\det\bm{X}_1-\log\det\bm{X}_2) > 0,
\end{equation*}
where the last step follows from \eqref{eq:detmono} and $\mu_2 > 0$. It follows that $\rho(\mu_1) > \rho(\mu_2)$.

The fact that $\rho(\mu)$ is a bijection follows from the fact that it is strictly increasing, as long as we restrict its codomain to the interval $(\optValSDP, \optValAC)$ (\ie, the interval of objective values corresponding to $\mu\in(0,\infty)$).
\end{proof}

We will make use of the \ac{KKT} conditions of~\eqref{opt:CPCert} (parameterized on $\epsilon$) in the following form:
\begin{gather}\label{eqn:kkt-eps}
    \bm{C}y_{m+1} + \sum\limits_{i=1}^m\bm{A}_i y_i -\bm{X}(\epsilon)^{-1} = 0,\\
    \inner{\bm{A}_i}{\bm{X}(\epsilon)} = b_i,~\forall~i=1,\dots,m, \\
    \inner{\bm{C}}{\bm{X}(\epsilon)} = \optValSDP + \epsilon\rho_c.
\end{gather}
We will also assume that $\epsilon \in (0, \bar{\epsilon})$ with $\bar{\epsilon}$ as in~\eqref{eqn:eps-bound}. The next lemma establishes a fact that we will require for the main theorem.
\begin{lemma}[Positivity of the cost multiplier]
\label{lem:nu-positive}
For any $\epsilon \in (0, \bar{\epsilon})$, the Lagrange multiplier associated with
the cost constraint in \eqref{eqn:kkt-eps} satisfies $y_{m+1} > 0$.
\end{lemma}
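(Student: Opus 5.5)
The plan is to link the solution of \eqref{opt:CPCert} at a given $\epsilon$ to a point on the central path, and to read the sign of $y_{m+1}$ off the central-path KKT conditions \eqref{eqn:kkt-ipm}.

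\textbf{Step 1: existence and uniqueness.} By \eqref{eqn:eps-bound}, the target value $\optValSDP+\epsilon\rho_c$ lies in the open interval $(\optValSDP,\optValAC)$ whenever $\epsilon\in(0,\bar{\epsilon})$. Lemma~\ref{lem:bijective-obj} then supplies a unique $\mu_\epsilon\in(0,\infty)$ with $\rho(\mu_\epsilon)=\optValSDP+\epsilon\rho_c$. Hence $\bm{X}(\mu_\epsilon)$ is positive definite and feasible for \eqref{opt:CPCert}, so the problem is strictly feasible. Because $-\log\det$ is strictly convex, any minimizer is unique.

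\textbf{Step 2: identify the minimizer.} I will show that $\bm{X}(\mu_\epsilon)$ is the minimizer and that its multipliers have the required sign. From \eqref{eqn:kkt-ipm} we have $\bm{X}(\mu_\epsilon)^{-1}=\frac{1}{\mu_\epsilon}\bm{H}(\mu_\epsilon)=\frac{1}{\mu_\epsilon}\bigl(\bm{C}+\sum_i\bm{A}_i\lambda_i(\mu_\epsilon)\bigr)$. Now set
\[
y_{m+1}=\frac{1}{\mu_\epsilon},\qquad y_i=\frac{\lambda_i(\mu_\epsilon)}{\mu_\epsilon}.
\]
With these choices the first equation of \eqref{eqn:kkt-eps} holds. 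The remaining two equations hold by feasibility of $\bm{X}(\mu_\epsilon)$.

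\textbf{Step 3: conclude sufficiency and uniqueness of $y_{m+1}$.} Problem \eqref{opt:CPCert} has a convex objective and affine equality constraints. Therefore these KKT conditions are sufficient, and $\bm{X}(\epsilon)=\bm{X}(\mu_\epsilon)$ is the unique optimizer.

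Next I must argue that $y_{m+1}$ is uniquely determined, so that every admissible multiplier is positive, not just the one constructed above. Suppose two multiplier vectors satisfy \eqref{eqn:kkt-eps}. Their difference $\bm{z}$ then satisfies $\bm{C}z_{m+1}+\sum_i\bm{A}_iz_i=\bm{0}$.

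If $z_{m+1}\neq0$, then $\bm{C}$ is a linear combination of the $\bm{A}_i$. In that case the objective $\inner{\bm{C}}{\bm{X}}$ is constant on the feasible set, which forces $\optValSDP=\optValAC$. This contradicts the nonempty interval guaranteed by Lemma~\ref{lem:bijective-obj} (equivalently, $\bar\epsilon>0$). Hence $z_{m+1}=0$, and $y_{m+1}=1/\mu_\epsilon>0$.

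\textbf{Main obstacle.} The step I expect to need the most care is the uniqueness argument for $y_{m+1}$ in Step 3. The degenerate case to exclude is $\bm{C}\in\operatorname{span}\{\bm{A}_i\}$, and I would do so via the strict monotonicity in Lemma~\ref{lem:bijective-obj}. I would also check that strict feasibility, i.e.\ the existence of a positive-definite feasible point, is implicitly assumed for the central path to exist. Existence of the central path itself is taken from \cite{klerkAspectsSemidefiniteProgramming2004}, as the paper does.
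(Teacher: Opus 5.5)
Your argument is correct, but it takes a different route from the paper's. You build the solution of \eqref{opt:CPCert} and its multiplier from a central-path point $\bm{X}(\mu_\epsilon)$, read off $y_{m+1}=1/\mu_\epsilon$, and then add a uniqueness argument so that every admissible multiplier is covered. The paper never invokes the central path or Lemma~\ref{lem:bijective-obj}; it compares $\bm{X}(\epsilon)$ directly with the analytic center. The cost gap $\inner{\bm{C}}{\bm{X}_{\subscr{AC}}-\bm{X}(\epsilon)}=(\bar{\epsilon}-\epsilon)\rho_c>0$ forces $\bm{X}(\epsilon)\neq\bm{X}_{\subscr{AC}}$, so $\log\det\bm{X}_{\subscr{AC}}>\log\det\bm{X}(\epsilon)$ by uniqueness of the analytic center. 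The supergradient inequality for $\log\det$ then gives $\inner{\bm{X}(\epsilon)^{-1}}{\bm{X}_{\subscr{AC}}-\bm{X}(\epsilon)}>0$. Substituting the stationarity condition of \eqref{eqn:kkt-eps} cancels the $\bm{A}_i$ terms, because both matrices satisfy the first $m$ constraints, and leaves $y_{m+1}(\bar{\epsilon}-\epsilon)\rho_c>0$.

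The paper's proof is shorter and holds for every multiplier satisfying \eqref{eqn:kkt-eps} at once, so no uniqueness step is needed. It uses only that $\bm{X}(\epsilon)$ admits some KKT multiplier and that $\bm{X}_{\subscr{AC}}$ exists. Your route gives more: it also proves existence of the solution of \eqref{opt:CPCert} and, in effect, the first half of the subsequent theorem. The price is reliance on the central path existing for every $\mu>0$ and on the surjectivity part of Lemma~\ref{lem:bijective-obj}. The paper only asserts that surjectivity: its proof establishes strict monotonicity, but covering all of $(\optValSDP,\optValAC)$ would also need continuity of $\rho$ and its limits as $\mu\to0$ and $\mu\to\infty$. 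Your uniqueness step can also be shortened with the paper's cancellation: pairing $\bm{C}z_{m+1}+\sum_i\bm{A}_iz_i=\bm{0}$ with $\bm{X}_{\subscr{AC}}-\bm{X}(\epsilon)$ gives $z_{m+1}(\bar{\epsilon}-\epsilon)\rho_c=0$ directly.
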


\begin{proof}
Since $\epsilon < \bar{\epsilon}$, the cost constraint of~\eqref{opt:CPCert} and the definition of $\bar{\epsilon}$ give
\begin{equation}
    \label{eq:cost-gap}
    \langle \bm{C}, \bm{X}_{\subscr{AC}} - \bm{X}(\epsilon) \rangle= (\bar{\epsilon} - \epsilon)\rho_c > 0,
\end{equation}
and, in particular, $\bm{X}_{\subscr{AC}} \neq \bm{X}(\epsilon)$. As $\bm{X}_{\subscr{AC}}$ is the \emph{unique} maximizer of $\log\det$ over the feasible set of~\eqref{opt:SDP}\footref{foot:log-conv} and $\bm{X}(\epsilon)$ is feasible, we have
\begin{equation}
\log\det \bm{X}_{\subscr{AC}} > \log\det \bm{X}(\epsilon).
\end{equation}
By strict concavity of the log-determinant, we have
\begin{equation*}
    \log\det \bm{X}(\epsilon) + \inner{\bm{X}(\epsilon)^{-1}}{\bm{X}_{\subscr{AC}} - \bm{X}(\epsilon)} > \log\det \bm{X}_{\subscr{AC}},
\end{equation*}
and canceling $\log\det \bm{X}(\epsilon)$ yields
\begin{equation}
\label{eq:supergrad}
    \inner{\bm{X}(\epsilon)^{-1}}{\bm{X}_{\subscr{AC}} - \bm{X}(\epsilon)}> 0.
\end{equation}
Substituting the stationarity condition in \eqref{eqn:kkt-eps}, we get
\begin{equation*}
y_{m+1} \left\langle \bm{C},\, \bm{X}_{\subscr{AC}} - \bm{X}(\epsilon) \right\rangle > 0,
\end{equation*}
where the $\bm{A}_i$ terms cancel since both matrices are feasible for~\eqref{opt:SDP}.
Combining with \eqref{eq:cost-gap} gives $y_{m+1} > 0$.
\end{proof}

\noindent We now establish the main result:
\begin{theorem}
    For all $\epsilon\in(0,\bar{\epsilon})$, the (unique) solution to~\eqref{opt:CPCert} is on the central path and, as $\epsilon\rightarrow0$, the solution of~\eqref{opt:CPCert} converges to the solution of~\eqref{opt:SDP}.
\end{theorem}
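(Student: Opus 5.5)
The plan is to argue existence and uniqueness first, then identify the solution with the central path via the multiplier $y_{m+1}$, and finally take the limit using Lemma~\ref{lem:bijective-obj}.

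\emph{Existence and uniqueness.} Fix $\epsilon\in(0,\bar{\epsilon})$. By Lemma~\ref{lem:bijective-obj}, the target value $\optValSDP+\epsilon\rho_c$ lies in the open interval $(\optValSDP,\optValAC)$. Since $\rho$ is a bijection onto this interval, there is a unique $\mu_\epsilon\in(0,\infty)$ with $\rho(\mu_\epsilon)=\optValSDP+\epsilon\rho_c$. The central-path point $\bm{X}(\mu_\epsilon)\in\PD^n$ therefore satisfies every constraint of~\eqref{opt:CPCert}, so the feasible set of~\eqref{opt:CPCert} meets $\PD^n$. The objective $-\log\det$ is strictly convex (footnote~\ref{foot:log-conv}), so any minimizer is unique. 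Existence of the minimizer uses the standard argument that $-\log\det$ is coercive on the bounded-level-set slice. Alternatively, it follows directly from the KKT verification below, since a KKT point of a convex problem is a global minimizer.

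\emph{Identification with the central path.} I will show that $\bm{X}(\mu_\epsilon)$ satisfies~\eqref{eqn:kkt-eps}. From the KKT conditions~\eqref{eqn:kkt-ipm} of~\eqref{opt:SDP-IPM}, we have $\bm{X}(\mu)^{-1}=\frac{1}{\mu}\bigl(\bm{C}+\sum_i\bm{A}_i\lambda_i(\mu)\bigr)$. Setting $y_{m+1}=1/\mu_\epsilon$ and $y_i=\lambda_i(\mu_\epsilon)/\mu_\epsilon$ therefore satisfies stationarity in~\eqref{eqn:kkt-eps}. Primal feasibility holds by the choice of $\mu_\epsilon$. Because~\eqref{opt:CPCert} is convex with affine constraints, the KKT conditions are sufficient, and uniqueness then gives $\bm{X}(\epsilon)=\bm{X}(\mu_\epsilon)$.

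For the converse direction, suppose $\bm{X}(\epsilon)$ and multipliers $\bm{y}$ satisfy~\eqref{eqn:kkt-eps}. Lemma~\ref{lem:nu-positive} gives $y_{m+1}>0$. Then $\mu=1/y_{m+1}$ and $\lambda_i=y_i/y_{m+1}$ satisfy~\eqref{eqn:kkt-ipm}, which recovers the correspondence $\mu=1/y_{m+1}$ and the dual recovery formulas~\eqref{eqn:dual-recovery}. This positivity step is the main subtlety: without $y_{m+1}>0$ we could not divide through to obtain a valid barrier parameter, and Lemma~\ref{lem:nu-positive} exists precisely to supply it.

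\emph{Limit.} The map $\epsilon\mapsto\mu_\epsilon=\rho^{-1}(\optValSDP+\epsilon\rho_c)$ is strictly increasing. As $\epsilon\to0$, the value $\rho(\mu_\epsilon)$ decreases to $\optValSDP$, the infimum of the range of $\rho$, and this forces $\mu_\epsilon\to0$. The argument is that $\mu_\epsilon$ is monotone, and any positive limit $\mu_0$ would give $\rho(\mu_\epsilon)>\rho(\mu_0)>\optValSDP$ for all $\epsilon$, a contradiction. Hence $\bm{X}(\epsilon)=\bm{X}(\mu_\epsilon)$ converges to the limit point of the central path as $\mu\to0$. By~\cite{halickaConvergenceCentralPath2002} (Lemma 1.1), this limit lies in the solution set of~\eqref{opt:SDP}. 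The dual variables from~\eqref{eqn:dual-recovery} equal the central-path duals $(\bm{H}(\mu_\epsilon),\bm{\lambda}(\mu_\epsilon))$, and their limit points satisfy the KKT conditions~\eqref{eqn:SDP-KKT}, as noted in Section~\ref{sec:bg-int-point}~\cite{klerkAspectsSemidefiniteProgramming2004}.
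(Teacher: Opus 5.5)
Your proposal is correct and follows the paper's argument. It uses the same KKT correspondence $\mu = 1/y_{m+1}$, $\lambda_i = y_i/y_{m+1}$ between~\eqref{opt:CPCert} and~\eqref{opt:SDP-IPM}, with Lemma~\ref{lem:nu-positive} giving $y_{m+1}>0$ and Lemma~\ref{lem:bijective-obj} turning $\epsilon\rightarrow0$ into $\mu\rightarrow0$ before invoking the central-path convergence result.

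The differences are only in ordering and detail. You start from $\mu_\epsilon=\rho^{-1}(\optValSDP+\epsilon\rho_c)$ and check that the central-path point satisfies the KKT conditions of~\eqref{opt:CPCert}; together with strict convexity, this already places the unique solution on the path without Lemma~\ref{lem:nu-positive}. You also show explicitly why $\mu_\epsilon\rightarrow0$, which the paper leaves implicit in the bijection.
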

\begin{proof}
To show that solutions of~\eqref{opt:CPCert} are on the central path, it is sufficient to show that the \ac{KKT} conditions of~\eqref{opt:CPCert} and~\eqref{opt:SDP-IPM} are equivalent, since both problems are convex.

Any point on the central path, $\bm{X}(\mu)$, satisfies the \ac{KKT} conditions of~\eqref{opt:SDP-IPM}:
\begin{gather}
    \bm{C} + \sum\limits_{i=1}^m\bm{A}_i \lambda_i -\mu\bm{X}(\mu)^{-1} = 0, \\
    \inner{\bm{A}_i}{\bm{X}(\mu)} = b_i,~\forall~i=1,\dots,m,
\end{gather}
where the $\lambda_i$ are the Lagrange multipliers corresponding to the equality constraints.
Comparing with the \ac{KKT} conditions in~\eqref{eqn:kkt-eps}, we note that the solutions to~\eqref{opt:SDP-IPM} and \eqref{opt:CPCert} can be related as follows:
\begin{gather}\label{eqn:app:mu-epsilon}
    \bm{X}(\mu) = \bm{X}(\epsilon),~\mu = \frac{1}{y_{m+1}},\lambda_i = \frac{y_i}{y_{m+1}},\\
    \rho(\mu) = \optValSDP + \epsilon\rho_c.
\end{gather}
Note that the second expression is valid for all $\mu\in(0,\infty)$ since $y_{m+1}>0$ by Lemma~\ref{lem:nu-positive}.

Under the relation~\eqref{eqn:app:mu-epsilon}, $\epsilon$ is an affine function of $\rho(\mu)$. By Lemma~\ref{lem:bijective-obj}, there exists a bijection between $\epsilon\in(0,\bar{\epsilon})$ and $\mu\in(0,\infty)$, where
\begin{equation}
    \bar{\epsilon}=\frac{\optValAC-\optValSDP}{\rho_c},
\end{equation}
which is exactly the assumed bound of the theorem.
Under the bijection, $\mu\rightarrow0$ is equivalent to $\epsilon\rightarrow0$ and, since $\bm{X}(\mu) = \bm{X}(\epsilon)$, the solution to~\eqref{opt:CPCert} also converges to the solution of~\eqref{opt:SDP}.
\end{proof}

\section{Equivalent Formulations of MSRC}\label{sec:app:equiv-msrc}

In~\cite{luskCLIPPERRobustData2024}, the original version of~\eqref{opt:MSRC} is given as follows:
\begin{equation}\label{opt:MSRC-alt}
\begin{array}{rl}
    \max\limits_{\bm{x} \in \mathbb{R}^n_+} & \bm{x}^T\Affinity\bm{x} \\
    \st & x_i x_j = 0~\mbox{if}~\Affinity_{ij}=0,~\forall~i,j, \\
    & \left\|\bm{x}\right\|^2_2 \leq 1.
\end{array}\tag{MSRC-alt}
\end{equation}
Our formulation of this problem replaces the inequality constraint on the norm of $\bm{x}$ with an equality based on the following theorem:
\begin{theorem}
    Let $\bm{x}^*$ be a globally optimal solution to~\eqref{opt:MSRC-alt}. Then $\left\|\bm{x}^*\right\|^2_2 = 1$.
\end{theorem}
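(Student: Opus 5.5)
The plan is a scaling argument combined with a check that the optimal value is strictly positive. The text says this holds whenever $\Affinity$ is not identically zero. In the present setting $\Affinity$ has unit diagonal, so this is automatic.

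First, I show that the optimal value of~\eqref{opt:MSRC-alt} is strictly positive. Take any standard basis vector $\bm{e}_k$. It is non-negative and has $\|\bm{e}_k\|^2_2=1$. It also satisfies the clique constraints trivially, since $x_ix_j=0$ for all $i\neq j$, and the diagonal products $x_k x_k$ are only constrained if $\Affinity_{kk}=0$, which cannot happen here. Its objective is $\bm{e}_k^T\Affinity\bm{e}_k=\Affinity_{kk}=1>0$. Hence the optimal value $\rho^*$ satisfies $\rho^*\geq 1>0$.

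If one only assumes $\Affinity\not\equiv\bm{0}$ and allows a zero diagonal, I would instead pick an entry $\Affinity_{ij}>0$ and use $\bm{x}=(\bm{e}_i+\bm{e}_j)/\sqrt{2}$. This support is a clique, because $\Affinity_{ij}\neq0$. It gives objective at least $\Affinity_{ij}>0$, using that all entries of $\Affinity$ are non-negative (scores are positive).

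Second, I argue by contradiction. Suppose $\bm{x}^*$ is optimal with $s:=\|\bm{x}^*\|_2^2<1$. Since $\rho^*>0$, we have $\bm{x}^*\neq\bm{0}$, so $s>0$. Define $\tilde{\bm{x}}=\bm{x}^*/\sqrt{s}$. Then $\tilde{\bm{x}}$ is feasible:
\begin{itemize}
\item non-negativity is preserved under positive scaling;
\item the support is unchanged, so every constraint $x_ix_j=0$ still holds;
\item $\|\tilde{\bm{x}}\|_2^2=1$.
\end{itemize}
Its objective is $\tilde{\bm{x}}^T\Affinity\tilde{\bm{x}}=\rho^*/s>\rho^*$, because $\rho^*>0$ and $s<1$. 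This contradicts optimality, so $\|\bm{x}^*\|_2^2=1$.

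The only step needing care is the strict positivity of $\rho^*$. Without it, a zero optimum would make $\bm{x}^*=\bm{0}$ optimal and the conclusion false. That step is handled by the feasible test vector in the first paragraph. The rest is routine homogeneity of the quadratic objective and of the clique constraints.
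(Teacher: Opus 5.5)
Your proof is correct and follows the paper's argument: you show the optimal value is positive via a feasible point with a single non-zero entry, then derive a contradiction by rescaling (you rescale exactly to unit norm, while the paper uses any $s>1$ with $\|s\bm{x}^*\|_2^2\leq1$). One caveat on your aside: under your own reading of the constraints, $\Affinity_{ii}=0$ imposes $x_i^2=0$, so $(\bm{e}_i+\bm{e}_j)/\sqrt{2}$ is infeasible when a diagonal entry vanishes (and with an all-zero diagonal the claim actually fails), but your main proof uses only the unit diagonal and is unaffected.
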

\begin{proof}
    First, note that $\Affinity$ has non-negative elements and positive diagonal by definition. Since the affinity cannot be identically zero, $\Affinity\not=\bm{0}$, the optimum cannot be identically zero, $\bm{x}^*\not=\bm{0}$; we can always select a feasible point, $\bm{x}$, with exactly one non-zero element so that $\bm{x}^{*T} \Affinity \bm{x}^{*}\geq\bm{x}^T \Affinity \bm{x}>0$.

    We prove the theorem by contradiction. Let $\bm{x}^*$ be a globally optimal solution to~\eqref{opt:MSRC-alt} and suppose $0<\left\|\bm{x}^*\right\|^2_2 < 1$.
    Then there exists an $s>1$ such that $\left\|s\bm{x}^*\right\|^2_2 \leq 1$. Moreover, $\forall~i,j$ such that $\Affinity_{ij}=0$, we have
    \begin{equation*}
        (s x^*_i) (s x^*_j) = s^2 (x^*_i x^*_j) = 0.
    \end{equation*}
    Thus $s \bm{x}^*$ is a feasible point for~\eqref{opt:MSRC-alt}. Considering the objective function, we have
    \begin{equation*}
        (s\bm{x}^{*})^T \Affinity (s\bm{x}^{*}) = s^2\bm{x}^{*T} \Affinity \bm{x}^{*} > \bm{x}^{*T} \Affinity \bm{x}^{*},
    \end{equation*}
    since $s^2>1$ and $\bm{x}^{*T} \Affinity \bm{x}^{*}>0$. This contradicts the fact that $\bm{x}^*$ is the global optimum since $(s\bm{x}^{*})$ is a feasible point with a higher objective value.
\end{proof}

\section{Preconditioner Matrix Derivation}\label{sec:app:precond}

Our derivation in this section is based on Section III-B of~\cite{zhangModifiedInteriorpointMethod2017}, but does not require a Cholesky factorization and is specialized to our specific choice of preconditioner.
Substituting $\tilde{\bm{X}}$ into~\eqref{eqn:precond-1} and expanding with the distributive property of the Kronecker product, we get
\begin{align}
    \Precond = \VecCons^T\Big(&\candx\candx^T\otimes\candx\candx^T + \candx\candx^T\otimes \bm{I}\tau \nonumber\\
    &+ \bm{I}\tau\otimes\candx\candx^T + \tau^2 \bm{I}\otimes\bm{I}\Big)\VecCons. \label{eqn:schur-expanded}
\end{align}
We have the following identities:
\begin{equation}
    \begin{gathered}
        \candx\candx^T\otimes\candx\candx^T = (\candx\otimes\candx)(\candx^T\otimes\candx^T), \\
        \VecCons^T(\candx\candx^T\otimes \bm{I})\VecCons = \VecCons^T(\bm{I}\otimes\candx\candx^T)\VecCons,\\
        \candx\candx^T\otimes \bm{I} = (\candx\otimes\bm{I})(\candx\otimes\bm{I})^T,
    \end{gathered}
\end{equation}
where we have applied Lemma 6 from~\cite{zhangModifiedInteriorpointMethod2017} for the second identity. Substituting these expressions into~\eqref{eqn:schur-expanded} gives
\begin{equation*}
    \begin{split}
        \VecCons^T\Big((\candx\otimes\candx)(\candx^T\otimes\candx^T)+2\tau(\candx\otimes\bm{I})(\candx\otimes\bm{I})^T\Big)\VecCons \\
        +\tau^2\VecCons^T\VecCons.
    \end{split}
\end{equation*}
Defining $\bm{V}\in \mathbb{R}^{(m+1)\times (n+1)}$ as
\begin{equation}
    \bm{V} = \begin{bmatrix}
        \VecCons^T(\candx\otimes\candx) & \sqrt{2\tau} \VecCons^T(\candx\otimes\bm{I})
    \end{bmatrix},
\end{equation}
we obtain the desired expression,
\begin{equation}
    \Precond = \tau^2\VecCons^T\VecCons + \bm{V}\bm{V}^T.
\end{equation}

\section{Adapting RANSAC to MSRC}\label{sec:app:ransac-adapt}

Classic RANSAC algorithms for pose registration select the best set of inliers by evaluating the residual error for the registration problem. However, in our context, we seek the set of inliers that maximizes the objective of~\eqref{opt:MSRC}. We implement the RANSAC scoring function to reflect this difference; each potential set of inliers for RANSAC is converted to a continuous solution via Algorithm~\ref{alg:disc-to-cont} and evaluated with respect to the objective. This version of RANSAC, which we call MSRC-RANSAC, is detailed in Algorithm~\ref{alg:msrc-ransac}. We use Arun's method to estimate the registration at each step~\cite{arunLeastSquaresFittingTwo1987} and define the error, $\bm{e}_i=\bm{p}_i - \left(\bm{R}\bm{q}_j + \bm{t}\right)$, as in Section~\ref{sec:cert-reg}.

\begin{algorithm}
    \caption{MSRC-RANSAC}\label{alg:msrc-ransac}
    \begin{algorithmic}[1]
        \Require Putative correspondences $n$, affinity matrix $\Affinity$, minimal sample size $m$, inlier threshold $\epsilon_{\mbox{\scriptsize in}}$, number of iterations $K_{\max}$
        \Ensure Best candidate solution $\hat{\bm{x}}$ to~\eqref{opt:MSRC}, or failure
        \State $\rho^{\mbox{\scriptsize best}} \gets -\infty$ \Comment{initialize best score}
        \State $\hat{\bm{x}} \gets$ failure
        \For{$k = 1, \dots, K_{\max}$}
            \State $\bm{u}_s \gets \mbox{RandomSample}(n, m)$ \Comment{sample min. set}
            \State $\bm{T} \gets \mbox{ArunMethod}(\bm{u}_s)$ \Comment{fit registration}
            \State $\bm{u} \gets \{i : \|\bm{e}_i(\bm{T})\|_2 \le \epsilon_{\mbox{\scriptsize in}}\}$ \Comment{form consensus set}
            \State $\bm{x} \gets \Call{DiscreteToContinuous}{\bm{u}, \Affinity}$ \Comment{via Alg.~\ref{alg:disc-to-cont}}
            \If{$\bm{x} \neq$ failure}
                \State $\rho \gets \objMSRC(\bm{x})$ \Comment{score via~\eqref{opt:MSRC}}
                \If{$\rho > \rho^{\mbox{\scriptsize best}}$}
                    \State $\rho^{\mbox{\scriptsize best}} \gets \rho$ \Comment{update best score}
                    \State $\hat{\bm{x}} \gets \bm{x}$ \Comment{update best candidate}
                \EndIf
            \EndIf
        \EndFor
        \State \Return $\hat{\bm{x}}$
    \end{algorithmic}
\end{algorithm}

\section{Summary of Algorithm Parameters}\label{sec:app:params}

Table~\ref{tab:parameters} summarizes the free parameters of \ac{CPCert}, introduced throughout Section~\ref{sec:cp-cert} and used in Algorithm~\ref{alg:cp-cert}.

\begin{table}[h]
    \centering
    \caption{Parameters of the Central Path Certifier (Algorithm~\ref{alg:cp-cert})}
    \label{tab:parameters}
    \begin{tabular}{c c c c}
        \hline
        \textbf{Symbol} & \textbf{Data Assoc.} & \textbf{Pose Reg.} & \textbf{Default}\\
        \hline
        $\delta$ & $10^{-7}$ & $10^{-5}$  & $10^{-5}$\\
        $\tau$ & $10^{-7}$ & $10^{-5}$ & $10^{-5}$\\
        $K_{\max}$ & 10 & 10 & 10\\
        $\alpha_{\subscr{\mbox{inc}}}$ & 0.1 & 0.1  & 0.1\\
        $\alpha_{\subscr{\mbox{dec}}}$ & 0.9 & 0.9 & 0.9\\
        $\sigma_{\subscr{\mbox{inc}}}$ & 2.0 & 2.0  & 2.0\\
        $\sigma_{\subscr{\mbox{dec}}}$ & 0.6 & 0.6 & 0.6\\
        $\epsilon_{\min}$ & $10^{-10}$ & $10^{-8}$ & $10^{-8}$\\
        $\tau_{\mbox{\scriptsize step}}$ & $10^{-10}$  & $10^{-10}$ & $10^{-10}$ \\
        $\tau_c$ & $ 10^{-5}$ & $10^{-5}$ & $ 10^{-5}$\\
        $\tau_p$ & $ 10^{-5}$ & $10^{-5}$  & $ 10^{-5}$\\
        $\theta_{\max}$ & $ 10^{-2}$ & $8\times10^{-5}$ & $ 10^{-2}$\\
        $\alpha_0$ & 1.0  & 1.0 & 1.0 \\
        $\alpha_{\min}$ & $ 10^{-10}$  & $10^{-10}$ & $ 10^{-10}$ \\
        $\sigma_{\alpha}$ & 0.8  & 0.8  & 0.8 \\
        \hline
    \end{tabular}
\end{table}
\bibliographystyle{IEEEtran}
\bibliography{CentralPathCertifier}
\vfill
\end{document}